\newtheorem{definition}{Definition}
\newtheorem{lemma}{Lemma}
\newtheorem{theorem}{Theorem}
\long\def\@makecaption#1#2{%
  \vskip\abovecaptionskip
  \sbox\@tempboxa{\normalfont\footnotesize #1: #2}%
  \ifdim \wd\@tempboxa >\hsize
    {\normalfont\footnotesize #1: #2\par}
  \else
    \global \@minipagefalse
    \hb@xt@\hsize{\box\@tempboxa\hfil}%
  \fi
  \vskip\belowcaptionskip}
\begin{document}

\title{Kronecker CP Decomposition with Fast Multiplication for Compressing RNNs}

\author{Dingheng Wang*, Bijiao Wu*, Guangshe Zhao, Man Yao, \\ Hengnu Chen, Lei Deng,~\IEEEmembership{Member,~IEEE}, Tianyi Yan\dag, Guoqi Li\dag,~\IEEEmembership{Member,~IEEE}
\thanks{*: These authors contribute equally to this work.}
\thanks{D. Wang, B. Wu, G. Zhao and M. Yao are with School of Automation Science and Engineering, Faculty of Electronic and Information Engineering, Xi'an Jiaotong University, Xi'an, Shaanxi 710049, China.}
\thanks{H. Chen and G. Li are with Department of Precision Instrumentation, Center for Brain Inspired Computing Research and Beijing Innovation Center for Future Chip, Tsinghua University, Beijing 100084, China.}
\thanks{L. Deng is with University of California, Santa Barbara, CA93106, USA.}
\thanks{T. Yan is with School of Life Science, Beijing Institute of Technology, Beijing 100084, China.}
\thanks{\dag: Corresponding authors, \protect\url{zhaogs@mail.xjtu.edu.cn}, \protect\url{liguoqi@mail.tsinghua.edu.cn}.}}
\maketitle

\begin{abstract}
Recurrent neural networks (RNNs) are powerful in the tasks oriented to sequential data, such as natural language processing and video recognition. However, since the modern RNNs, including long-short term memory (LSTM) and gated recurrent unit (GRU) networks, have complex topologies and expensive space/computation complexity, compressing them becomes a hot and promising topic in recent years. Among plenty of compression methods, tensor decomposition, e.g., tensor train (TT), block term (BT), tensor ring (TR) and hierarchical Tucker (HT), appears to be the most amazing approach since a very high compression ratio might be obtained. Nevertheless, none of these tensor decomposition formats can provide both the space and computation efficiency. In this paper, we consider to compress RNNs based on a novel Kronecker CANDECOMP/PARAFAC (KCP) decomposition, which is derived from Kronecker tensor (KT) decomposition, by proposing two fast algorithms of multiplication between the input and the tensor-decomposed weight. According to our experiments based on UCF11, Youtube Celebrities Face and UCF50 datasets, it can be verified that the proposed KCP-RNNs have comparable performance of accuracy with those in other tensor-decomposed formats, and even 278,219\(\times\) compression ratio could be obtained by the low rank KCP. More importantly, KCP-RNNs are efficient in both space and computation complexity compared with other tensor-decomposed ones under similar ranks. Besides, we find KCP has the best potential for parallel computing to accelerate the calculations in neural networks.
\end{abstract}

\begin{IEEEkeywords}
Kronecker Tensor Decomposition, Kronecker CP Decomposition, Fast Multiplication, Network Compression, Recurrent Neural Networks
\end{IEEEkeywords}

\section{Introduction}\label{sec:Intro}

It is commonly believed that the recurrent neural network (RNN) is one of the most powerful instruments to deal with the data processing tasks that refer to sequential structures, such as language processing and video recognition. Among multiple variants of RNNs, long-short term memory (LSTM) \cite{Hochreiter_1997_LSTMInvent} and gated recurrent unit (GRU) \cite{Cho_2014_GRUInvent} are the two most widely used architectures nowadays, since they can avoid the intractable vanishing gradient problem \cite{Bengio_1994_VanishGradient}. However, both of these kinds of RNNs have complex internal topologies. Making LSTM as an example, each of its units has 4 gated connections like \cite{Greff_2017_LSTM}
\begin{equation}\label{Eq_lstm}
\begin{aligned}
&\bm{y}_{f}=\sigma(\bm{W}_{f}\bm{x}(t)+\bm{U}_{f}\bm{h}(t-1)+\bm{b}_{f}) \\
&\bm{y}_{i}=\sigma(\bm{W}_{i}\bm{x}(t)+\bm{U}_{i}\bm{h}(t-1)+\bm{b}_{i}) \\
&\bm{y}_{z}=\sigma(\bm{W}_{z}\bm{x}(t)+\bm{U}_{z}\bm{h}(t-1)+\bm{b}_{z}) \\
&\bm{y}_{o}=\sigma(\bm{W}_{o}\bm{x}(t)+\bm{U}_{o}\bm{h}(t-1)+\bm{b}_{o}) \\
&\bm{c}(t) = \bm{y}_{f} \odot \bm{c}(t-1) + \bm{y}_{i} \odot \bm{y}_{z} \\
&\bm{h}(t)=\bm{y}_{o} \odot \sigma(\bm{c}(t))
\end{aligned}
\end{equation}
where \(\bm{x}(t)\) is the input vector , \(\bm{h}(t-1)\) denotes the status of previous time, \(\bm{c}(t)\) represents the long time memory, \(\odot\) is the element-wise product, \(\bm{W}_{\theta}\), \(\bm{U}_{\theta}\), \(\bm{b}_{\theta}\), and \(\bm{y}_{\theta}\) (\(\theta \in \{f,i,z,o\}\)) are respectively the input matrices, state matrices, biases and output vectors. Evidently, the volume of matrices in modern RNNs is the fountainhead of their considerable high space complexity. Therefore, many researchers have aimed their attention to compressing RNNs \cite{Deng_2020_Survey}.

Commonly, there are four aspects can be concluded to compress RNNs \cite{Deng_2020_Survey}: 1) compact design which optimizes the redundant topologies in the architecture of RNNs, e.g., S-LSTM \cite{Wu_2016_SLSTM} and skip-connected RNN \cite{Zhang_2016_SkipRNN}; 2) quantizing data from high-precision value space to low-precision value space such as integer field \(\mathbb{N}\) \cite{Hubara_2018_QRNN}; 3) sparsification which prunes unimportant connections or neurons, e.g., H-LSTM \cite{Dai_2020_PruneRNN}; and 4) tensor decomposition that transforms the weights to low rank tensor formats \cite{Novikov_2015_TT}. Among which, tensor decomposition performs significant superiority over other compression approaches when orienting to RNNs, since the extreme compression ratio, i.e., more than \(10^5 \times\) \cite{Yang_2017_TTRNN,Ye_2018_BTD,Pan_2019_TRRNN,Yin_2020_HTRNN}, could be obtained. Furthermore, these results based on tensor decomposition \cite{Yang_2017_TTRNN,Ye_2018_BTD,Pan_2019_TRRNN,Yin_2020_HTRNN} also inspire us that, tensor networks have demonstrated the superiority over traditional CANDECOMP/PARAFAC (CP) and Tucker decomposition due to the elimination of the curse of dimensionality \cite{Cichocki_2018_TensorNetworks}.

To be specific, Tjandra et al. \cite{Tjandra_2017_TTRNN1} first implemented tensor train (TT) decomposition \cite{Oseledets_2011_InventTT} to compress RNNs, since TT has a relatively simple and regular data structure. After that, Yang et al. \cite{Yang_2017_TTRNN} followed them to compress LSTM \& GRU and gained amazing compression ratio and accuracy on UCF11 \cite{Liu_2011_UCF11} and Youtube Celebrities Face \cite{Kim_2008_YCF} datasets. Ye et al. \cite{Ye_2018_BTD} used block term (BT) decomposition \cite{DeLathauwer_2008_InventBTD} to replace TT and obtained better accuracy on UCF11, while the amount of calculations of BT is more serious since its high-ordered Tucker kernel tensors. Pan et al. \cite{Pan_2019_TRRNN} selected tensor ring (TR) decomposition \cite{Zhao_2018_TR} to achieve higher compression ratio and accuracy on UCF11 as well, but more computation complexity is inevitable because of more core tensors and ranks. Most recently, Yin et al. \cite{Yin_2020_HTRNN} and Wu et al. \cite{Wu_2020_Hybrid} applied relatively complex hierarchical Tucker (HT) decomposition \cite{Grasedyck_2010_InventHT} and acquired better accuracy than TT on UCF11, however, the computation complexity of their practices is also intractable because HT has a kind of complex binary tree structure. Reviewing these practices and their conclusions, it seems that in the aspects of accuracy and space complexity, the selection priority of tensor decomposition method is TT \(<\) BT \(<\) TR \(<\) HT, i.e., HT is the best to gain the most top accuracy and the minimum number of parameters while TT is the worst. However, only TT appears to be efficient in the practical computations due to its efficient sequenced contractions \cite{Novikov_2015_TT}. Scilicet, none of these tensor decomposition could satisfy us in both space and computation complexity.

\begin{figure}
\centering
\includegraphics[width=0.35\textwidth]{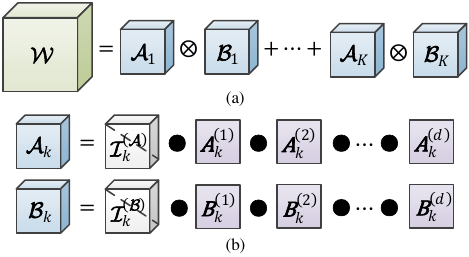}
\caption{The (a) KT decomposition described in Equation (\ref{Eq_general_ktd}) and (b) the derived KCP decomposition proposed in \cite{Phan_2013_KTD2}.}
\label{Fig_general_ktd}
\end{figure}

In this paper, we consider \emph{Kronecker CANDECOMP/PARAFAC (KCP)} decomposition \cite{Phan_2013_KTD2}, which is derived from another classical tensor decomposition method, i.e., \emph{Kronecker tensor (KT)} decomposition \cite{Phan_2012_KTD1}, to compress RNNs since its factor matrices are sparse and partitioned that might help to decrease both the space and computation complexity considerably. Generally, for a \(d\)th-order tensor \(\bm{\mathcal{W}} \in \mathbb{R} ^{l_1 \times l_2 \times \cdots \times l_d}\), if its every mode can be factorized as \(l_i=m_{i}n_{i}\) (\(i \in \{1,2,\cdots,d\}\)), then there exist \(K\) pairs of factor tensors \(\bm{\mathcal{A}}_{k} \in \mathbb{R} ^{m_1 \times m_2 \times \cdots \times m_d}\) and \(\bm{\mathcal{B}}_{k} \in \mathbb{R} ^{n_1 \times n_2 \times \cdots \times n_d}\) that yield the KT format of \(\bm{\mathcal{W}}\) like \cite{Phan_2012_KTD1}
\begin{equation}\label{Eq_general_ktd}
\bm{\mathcal{W}}=\sum_{k=1}^{K}{\bm{\mathcal{A}}_{k} \otimes \bm{\mathcal{B}}_{k}}
\end{equation}
where \(\otimes\) denotes the Kronecker or tensor product, and \(K\) is termed as \emph{KT rank}. Figure \ref{Fig_general_ktd}(a) illustrates the structure of KT decomposition vividly. The most important characteristic of KT is that it can be transformed into a new CP format, i.e., KCP, whose every factor matrix is actually a sparse partitioned matrix according to Lemma 2.1 in \cite{Phan_2013_KTD2}. Concretely, both \(\bm{\mathcal{A}}_{k}\) and \(\bm{\mathcal{B}}_{k}\) could be further decomposed into the CP format as drawn in Figure \ref{Fig_general_ktd}(b), then these produced factor matrices can be concatenated together as the new factor matrices of CP format of \(\bm{\mathcal{W}}\). Except the conventional space efficiency, we realize that this KCP format with sparse partitioned factor matrices is very friendly to the efficient computation even the parallel computing. Moreover, according to our experiments on UCF11, Youtube Celebrities Face, and UCF50 datasets, our KCP-RNNs have the comparable performance of accuracy with other tensor-decomposed RNNs, and lead to the preferable efficiency in both space and computation.

The major merits of this work are shown below:

\begin{itemize}
    \item We propose a novel KCP-RNN to compress traditional RNNs, and achieve competent performance compared with other published practices.
    \item We propose two algorithms based on several lemmas and theorems to accelerate the multiplication within KCP-RNNs.
    \item We discover that different tensor-decomposed RNNs have similar level in accuracy, but KCP-RNNs could take into account both the space and computation complexity.
\end{itemize}

The rest of this paper is organized as follows. Section \ref{sec:Relate} further discusses current tensor decomposition methods for compressing RNNs, emphasizes their limitation in computation. Section \ref{sec:KTDKCP} introduces and proves some basic laws of KCP decomposition. Section \ref{sec:KCPRNN} introduces our methods based on KCP format. Section \ref{sec:Exp} verifies the effect of our methods and discovers that different tensor-decomposed RNNs have no evident differences in the final recognition scores. Section \ref{sec:Disc} gives some further discussions about our methods and experiments. Section \ref{sec:Conc} concludes this work.

To maintain the consistency of mathematical notation, we will use the bold lower case letter as the vector symbol (e.g. \( \bm{a} \)), the bold upper case letter as the matrix symbol (e.g. \( \bm{A} \)), and the calligraphic bold upper case letter as the tensor notation (e.g. \( \bm{ \mathcal{A} } \)) in the following contents as well.

\section{Related Works}\label{sec:Relate}

By utilizing TT, Yang et al. \cite{Yang_2017_TTRNN} first gained extremely amazing compression ratio and accuracy which verified the potential of tensor-decomposed neural networks. Specifically, for a weight matrix \(\bm{W} \in \mathbb{R} ^{M \times N}\) with \(M=\prod_{i=1}^{d} m_{i}\) and \(N=\prod_{i=1}^{d} n_{i}\), it can be reshaped into the \(d\)th-order tensor \(\bm{\mathcal{W}}\) which is as the same as that in Equation (\ref{Eq_general_ktd}), and we have \cite{Oseledets_2011_InventTT,Novikov_2015_TT,Yang_2017_TTRNN}
\begin{equation}\label{Eq_tt}
\bm{\mathcal{W}}=\bm{\mathcal{G}}_{1} \bullet \bm{\mathcal{G}}_{2} \bullet \cdots \bullet \bm{\mathcal{G}}_{d}
\end{equation}
where any single \( \bm{\mathcal{G}}_{i} \in \mathbb{R} ^{r_{i-1} \times m_{i}n_{i} \times r_{i}} \) is called the core tensor, all the \(r_i\) (\(r_0=r_d=1\)) are termed as TT ranks, and the operator \(\bullet\) is the contracted product or contraction. The corresponding input vector \(\bm{x}\) can also be tensorized into a \(d\)th-order tensor \(\bm{\mathcal{X}} \in \mathbb{R} ^{m_{1} \times m_{2} \times \cdots \times m_{d}}\), which can be contracted with \(\bm{\mathcal{G}}_{i}\) one by one so that the computation of this procedure is relatively efficient. Moreover, tensor network graph \cite{Espig_2011_TensorGraph} shown in Figure \ref{Fig_other_tng}(a) (limited in 4th-order for convenience) could express the structure of all the contractions distinctly, where each node denotes a single tensor, the connected lines represent the corresponding modes, and any line between two nodes could be contracted.

\begin{figure}
\centering
\includegraphics[width=0.45\textwidth]{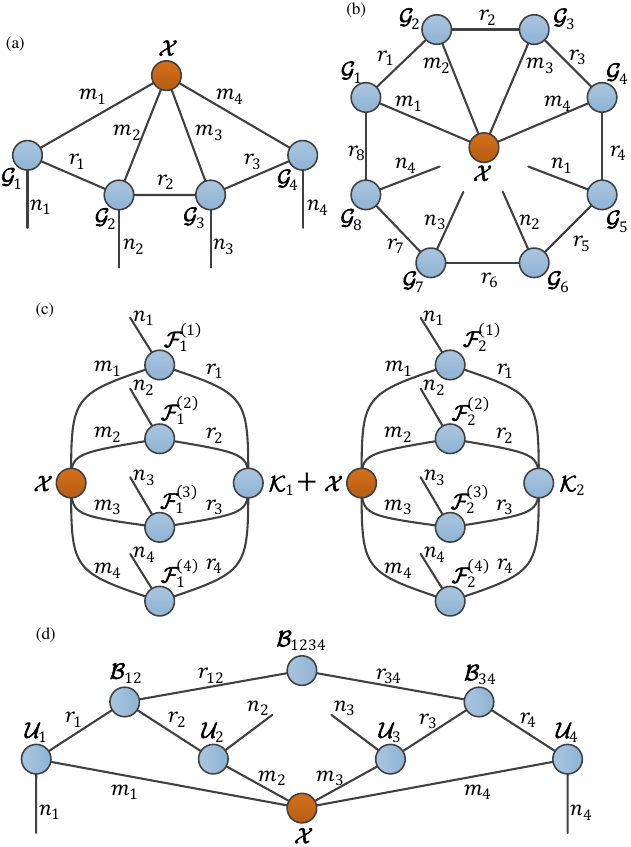}
\caption{Tensor network graphs of multiplication between input \(\bm{\mathcal{X}} \in \mathbb{R} ^{m_1 \times m_2 \times m_3 \times m_4}\) and weight \(\bm{\mathcal{W}} \in \mathbb{R} ^{m_{1}n_{1} \times m_{2}n_{2} \times m_{3}n_{3} \times m_{4}n_{4}}\) in (a) TT (b) TR (c) BT and (d) HT. Note that when the input \(\bm{\mathcal{X}}\) is contracted, \(m_i\) and \(n_i\) in \(\bm{\mathcal{W}}\) are two modes rather than a single mode \(m_{i}n_{i}\), so matrices \(\bm{F}_{p}^{(i)}\) and \(\bm{U}_{i}\) are drawn like tensors here. Besides, \(\bm{\mathcal{X}}\) can link to all the other nodes only in TT format.}
\label{Fig_other_tng}
\end{figure}

Pan et al. \cite{Pan_2019_TRRNN} utilized TR whose corresponding tensorizing approach appears to be the most special. For the weight tensor \(\bm{\mathcal{W}}\) defined in Equation (\ref{Eq_general_ktd}), TR generally regards it as the \(2d\)th-order rather than \(d\)th-order tensor as
\begin{equation}\label{Eq_tr}
\bm{\mathcal{W}}=\bm{\mathcal{G}}_{1} \bullet \bm{\mathcal{G}}_{2} \bullet \cdots \bullet \bm{\mathcal{G}}_{d} \bullet \bm{\mathcal{G}}_{d+1} \bullet \bm{\mathcal{G}}_{d+2} \bullet \cdots \bullet \bm{\mathcal{G}}_{d+d}
\end{equation}
where \( \bm{\mathcal{G}}_{i} \in \mathbb{R} ^{r_{i-1} \times m_{i} \times r_{i}} \) \& \( \bm{\mathcal{G}}_{d+i} \in \mathbb{R} ^{r_{d+i-1} \times n_{i} \times r_{d+i}} \) are core tensors, and all the \(r_{i}\) \& \(r_{d+i}\) are TR ranks (\(r_0=r_{2d} \neq 1\) hence two paired ranks should be contracted at the last operation). Compared with TT, it is clear that TR is more fine-grained so that higher compression ratio might be obtained. However, almost half of the core tensors can not be connected to the input \(\bm{\mathcal{X}}\) directly according to Figure \ref{Fig_other_tng}(b) so that the poor computation complexity is the impassable obstacle for TR.

Ye et al. \cite{Ye_2018_BTD} tried BT to replace TT to compress LSTMs and got better accuracy. For the same weight tensor \(\bm{\mathcal{W}}\) in Equation (\ref{Eq_general_ktd}), its BT format can be written as \cite{DeLathauwer_2008_InventBTD,Ye_2018_BTD}
\begin{equation}\label{Eq_btd}
\bm{\mathcal{W}}=\sum_{p=1}^{P} \bm{\mathcal{K}}_{p} \bullet \bm{F}_{p}^{(1)} \bullet \bm{F}_{p}^{(2)} \bullet \cdots \bullet \bm{F}_{p}^{(d)}
\end{equation}
where the right side of the equation is in fact a sum of \(P\) Tucker-decomposed tensors, \(\bm{\mathcal{K}}_{p} \in \mathbb{R} ^{r_1 \times r_2 \times \cdots \times r_d}\) is the kernel tensor of the \(p\)th Tucker, and \(\bm{F}_{p}^{(i)} \in \mathbb{R} ^{r_{i} \times m_{i}n_{i}}\) means the \(i\)th factor matrix of the \(p\)th Tucker. It is obvious that more computations should be considered during contracting with \(\bm{\mathcal{X}}\) according to Figure \ref{Fig_other_tng}(c), where \(\bm{\mathcal{X}}\) has no direct connections to the kernel tensors \(\bm{\mathcal{K}}_{p}\).

HT is relatively the most complex tensor-decomposed format as there are multiple different approaches to split the modes of \(\bm{\mathcal{W}}\) in Equation (\ref{Eq_general_ktd}) \cite{Grasedyck_2010_InventHT}. Generally, both Yin et al. \cite{Yin_2020_HTRNN} and Wu et al. \cite{Wu_2020_Hybrid} selected the balanced binary tree to be the basis of HT format for the sake of simplicity, and the specific format is \cite{Wu_2020_Hybrid}
\begin{equation}\label{Eq_ht}
\begin{aligned}
\bm{\mathcal{W}}=&\bm{U}_{1} \bullet \bm{\mathcal{B}}_{12} \bullet \bm{U}_{2} \bullet \bm{\mathcal{B}}_{1234} \bullet \cdots \\
&\bullet \bm{U}_{d/2} \bullet \bm{\mathcal{B}}_{12 \cdots d} \bullet \bm{U}_{d/2 + 1} \bullet \cdots \bullet \bm{U}_{d}
\end{aligned}
\end{equation}
where each \(\bm{U}_{i} \in \mathbb{R} ^{m_{i}n_{i} \times r_{i}}\) is called truncated matrix, \(\bm{\mathcal{B}}_{xy} \in \mathbb{R} ^{r_{x} \times r_{y} \times r_{xy}}\) (\(x \in \{1,3,\cdots,d-1,12,56,\cdots\}\) \& \(y \in \{2,4,\cdots,d,34,78,\cdots\}\)) is termed as transfer tensor, and all these \(r_{i}\), \(r_{x}\), \(r_{y}\) and \(r_{xy}\) are HT ranks. Beyond all doubt, even the balanced HT is still complicated, and Figure \ref{Fig_other_tng}(d) expressly illustrates the complex topology of the multiplication between \(\bm{\mathcal{X}}\) and \(\bm{\mathcal{W}}\), where all the transfer tensors are not connected to \(\bm{\mathcal{X}}\) so the most efficient inorder traversal can hardly surpass the computation speed of TT \cite{Wu_2020_Hybrid}.

In a word, except TT, all the other tensor decomposition methods can not afford us a satisfying computation efficiency, while conversely TT is the most coarse-grained format which is difficult to provide an impressive compression ratio. On the other hand, there are no obvious variations of precision among different decomposition methods in theory \cite{Cichocki_2016_TensorBook}, which have not been verified by the related works \cite{Yang_2017_TTRNN,Ye_2018_BTD,Pan_2019_TRRNN,Yin_2020_HTRNN,Wu_2020_Hybrid} since their experimental details are not exactly the same. Hence, it is very significant to develop a novel tensor format to optimize both the space and computation complexity of RNNs, and study whether the accuracy of different tensor-decomposed RNNs have differences.

\section{KCP Decomposition}\label{sec:KTDKCP}

In this section, we introduce the knowledge of KCP to reinforce the theoretical foundation of our methods, by first analysing some characteristics of KT, since the latter is the seed of KCP. Furthermore, as it is hard to find proofs for some important conclusions in the studies of predecessors \cite{Phan_2012_KTD1,Phan_2013_KTD2}, we give our own ones here.

\subsection{Restricted KT Decomposition}

In fact, the most general KT format defined in Equation (\ref{Eq_general_ktd}) is very flexible, since \(l_i=m_{i}n_{i}\) might not be the unique factorization. For example, assume there is \(m_{i}=a_{i}b_{i}\), we can force \(l_i=a_{i} \times (b_{i}n_{i})\) and apply this factorization to some parts of factor tensors, then \(\bm{\mathcal{A}}_{k}\) and \(\bm{\mathcal{B}}_{k}\) under the different value of \(k\) will have different shapes. Undoubtedly, this flexibility is unfriendly to practical applications. Therefore, we just consider the KT format whose \(m_{i}\) and \(n_{i}\) are unchanged for all \(k\). We term this format as the \emph{restricted KT decomposition (RKT)}, and all the following descriptions of KT and KCP shall be based on RKT unless otherwise indicated.

For RNNs, weight matrices are in fact \(2\)nd-order tensors, thus researchers always reshaped them into high-ordered tensors to achieve better compression ratio just like Equation (\ref{Eq_tt}), and this approach is known as \emph{tensorizing} \cite{Novikov_2015_TT}. Obviously, in analogy with Equation (\ref{Eq_general_ktd}), KT format inherently has the pathway of tensorizing. Therefore, for the weight matrix \(\bm{W} \in \mathbb{R} ^{M \times N}\) defined in Equation (\ref{Eq_tt}), its KT format of tensorizing defined in Equation (\ref{Eq_general_ktd}) can yield a rank-\(K\) factorization of \(\bm{W}\) like \cite{Phan_2012_KTD1}
\begin{equation}\label{Eq_general_km}
\bm{W}=\sum_{k=1}^{K}{\bm{a}_{k} \bm{b}_{k}^{\rm T}}.
\end{equation}
where \(\bm{a}_{k}={\rm vec}(\bm{\mathcal{A}}_{k}) \in \mathbb{R} ^{M}\) and \(\bm{b}_{k}={\rm vec}(\bm{\mathcal{B}}_{k}) \in \mathbb{R} ^{N}\) are the vectorization of factor tensors. However, this equation defined in \cite{Phan_2012_KTD1} is lack of proof so we will give our own here with the definition of multi-indices which is also helpful to the following other proofs.

\begin{definition}[Multi-indices]\label{def_MultiInd}
For a \(d\)th-order tensor \(\bm{\mathcal{A}} \in \mathbb{R}^{n_1 \times n_2 \times \cdots \times n_d}\) whose entry has the indices \((\nu_{1}, \nu_{2}, \cdots, \nu_{d})\), if its \(k\) (\(k \in \{2,3,\cdots,d\}\)) modes \(n_{i+1}\), \(n_{i+2}\), \(\cdots\), \(n_{i+k}\) (\(i \in \{0,1,\cdots,d-2\}\)) are reshaped into a single mode \(m\), \textit{i.e.}, \(m = n_{i+1}n_{i+2} \cdots n_{i+k}\), the corresponding indices \((\nu_{i+1}, \nu_{i+2}, \cdots \nu_{i+k})\) will become \(\mu = \overline{\nu_{i+1} \nu_{i+2} \cdots \nu_{i+k}}\) and \(\mu \in \{0,1,\cdots,m-1\}\), which is termed as multi-indices and can be calculated as \cite{Dolgov_2014_MultiIndex}
\begin{equation}
\begin{aligned}
\mu &= \overline{\nu_{i+1} \nu_{i+2} \cdots \nu_{i+k}} \\ 
&= \nu_{i+1} + \nu_{i+2}n_{i+1} + \cdots + \nu_{i+k}n_{i+1} \cdots n_{i+k-1}
\end{aligned}
\end{equation}
in which we rule the value of index to begin at 0 for brevity.
\end{definition}

\begin{lemma}[Rank-\(K\) Factorization]\label{lem_rankk}
If the \(d\)th-order tensor \(\bm{\mathcal{W}} \in \mathbb{R} ^{m_{1}n_{1} \times m_{2}n_{2} \times \cdots \times m_{d}n_{d}}\) has the KT format defined in Equation (\ref{Eq_general_ktd}), then its \((M \times N)\) matricization can be represented by Equation (\ref{Eq_general_km}) with the constraints \(M=\prod_{i=1}^{d} m_{i}\) and \(N=\prod_{i=1}^{d} n_{i}\).
\end{lemma}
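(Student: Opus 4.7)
The plan is to verify the identity entry-wise using the multi-index machinery of Definition \ref{def_MultiInd}. Since both sides of Equation (\ref{Eq_general_km}) are linear in $k$, I first reduce to showing that, for a single pair $(\bm{\mathcal{A}}_k,\bm{\mathcal{B}}_k)$, the $(M\times N)$-matricization of $\bm{\mathcal{A}}_k\otimes\bm{\mathcal{B}}_k$ coincides with $\bm{a}_k \bm{b}_k^{\rm T}$; summing over $k$ then delivers the lemma.

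Concretely, I would index each mode of $\bm{\mathcal{A}}_k\otimes\bm{\mathcal{B}}_k$ by a multi-index $\alpha_i=\overline{\gamma_i\beta_i}$ with $\gamma_i\in\{0,\ldots,m_i-1\}$ and $\beta_i\in\{0,\ldots,n_i-1\}$. By the definition of the tensor Kronecker product, the corresponding entry factors multiplicatively as $(\bm{\mathcal{A}}_k)_{\gamma_1,\ldots,\gamma_d}\,(\bm{\mathcal{B}}_k)_{\beta_1,\ldots,\beta_d}$. I then matricize by bundling all $\gamma_i$ into a row multi-index $\mu=\overline{\gamma_1\gamma_2\cdots\gamma_d}\in\{0,\ldots,M-1\}$ and all $\beta_i$ into a column multi-index $\nu=\overline{\beta_1\beta_2\cdots\beta_d}\in\{0,\ldots,N-1\}$. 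Applying Definition \ref{def_MultiInd} to the (column-major) vectorizations gives $(\bm{a}_k)_\mu=(\bm{\mathcal{A}}_k)_{\gamma_1,\ldots,\gamma_d}$ and $(\bm{b}_k)_\nu=(\bm{\mathcal{B}}_k)_{\beta_1,\ldots,\beta_d}$, so the entry equals $(\bm{a}_k \bm{b}_k^{\rm T})_{\mu,\nu}$ as required, and the claim follows after a final summation over $k$.

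The hard part will be the bookkeeping of index conventions, not any deep algebra. The tensor Kronecker product interleaves the $m_i$- and $n_i$-factor indices within each mode $i$, whereas the target matricization must separate them, gathering all $m$-side indices into the row and all $n$-side indices into the column. The crux is therefore the combinatorial identity asserting that extracting the $\gamma_i$ coordinates out of $(\overline{\gamma_1\beta_1},\ldots,\overline{\gamma_d\beta_d})$ and re-flattening them in order yields exactly $\overline{\gamma_1\cdots\gamma_d}$, and likewise for the $\beta_i$. Verifying this requires a short arithmetic check against the base-weighted formula in Definition \ref{def_MultiInd} to confirm that no cross-terms between the two index families appear; once that is settled, the remainder of the argument collapses to the one-line factorization above.
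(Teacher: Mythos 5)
Your proposal is correct and follows essentially the same route as the paper's proof: an entry-wise verification in which the Kronecker-product entry factors as $\mathcal{A}_{k}(\mu_{1},\ldots,\mu_{d})\mathcal{B}_{k}(\nu_{1},\ldots,\nu_{d})$, the matricization is identified via the regrouped multi-indices $\overline{\mu_{1}\cdots\mu_{d}}$ and $\overline{\nu_{1}\cdots\nu_{d}}$, and the vectorizations $\bm{a}_{k}$, $\bm{b}_{k}$ absorb the tensor entries; the only cosmetic difference is that you peel off the sum over $k$ by linearity first, and you flag as a "combinatorial identity" the index regrouping that the paper simply adopts as the definition of the $(M\times N)$ matricization.
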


\begin{proof}
According to Equation (\ref{Eq_general_ktd}), the entry of \(\bm{\mathcal{W}}\) is
\begin{equation}\label{Eq_lem_1}
\begin{aligned}
&\mathcal{W}(\overline{\mu_{1}\nu_{1}},\overline{\mu_{2}\nu_{2}},\cdots,\overline{\mu_{d}\nu_{d}}) \\ 
= &\sum_{k=1}^{K}{\mathcal{A}_{k}(\mu_{1},\mu_{2},\cdots,\mu_{d})\mathcal{B}_{k}(\nu_{1},\nu_{2},\cdots,\nu_{d})}
\end{aligned}
\end{equation}
where \(\mu_{i} \in \{0,1,\cdots,m_{i}-1\}\) and \(\nu_{i} \in \{0,1,\cdots,n_{i}-1\}\). When we reshape \(\bm{\mathcal{W}}\) to the matrix \(\bm{W}\), because of the constraints \(M=\prod_{i=1}^{d} m_{i}\) and \(N=\prod_{i=1}^{d} n_{i}\), we have
\begin{equation}\label{Eq_lem_2}
\mathcal{W}(\overline{\mu_{1}\nu_{1}},\cdots,\overline{\mu_{d}\nu_{d}}) = W(\overline{\mu_{1}\cdots\mu_{d}},\overline{\nu_{1}\cdots\nu_{d}}). 
\end{equation}
Similarly, if we stretch \(\bm{\mathcal{A}}_{k}\) and \(\bm{\mathcal{B}}_{k}\) into vectors \(\bm{a}_{k}\) and \(\bm{b}_{k}\) respectively, we have
\begin{equation}\label{Eq_lem_3}
\begin{aligned}
&\mathcal{A}_{k}(\mu_{1},\mu_{2},\cdots,\mu_{d}) = a_{k}(\overline{\mu_{1}\mu_{2}\cdots\mu_{d}}) \\
&\mathcal{B}_{k}(\nu_{1},\nu_{2},\cdots,\nu_{d}) = b_{k}(\overline{\nu_{1}\nu_{2}\cdots\nu_{d}}).
\end{aligned}
\end{equation}
Based on the simultaneous Equations (\ref{Eq_lem_1}), (\ref{Eq_lem_2}) and (\ref{Eq_lem_3}), there is
\begin{equation}
\begin{aligned}
&W(\overline{\mu_{1}\mu_{2}\cdots\mu_{d}},\overline{\nu_{1}\nu_{2}\cdots\nu_{d}}) \\ = &\sum_{k=1}^{K}{a_{k}(\overline{\mu_{1}\mu_{2}\cdots\mu_{d}})b_{k}(\overline{\nu_{1}\nu_{2}\cdots\nu_{d}})}
\end{aligned}
\end{equation}
which can conclude this lemma.
\end{proof}

The Lemma \ref{lem_rankk} can afford us at least two aspects of inspirations. One is that KT contains tensorizing approach which is necessary to compress weight matrices in terms of tensors. The other is that the KT rank \(K\) might determine the precision of decomposition to a great extent.

\subsection{Bounds of KT Rank}

Though the KT rank \(K\) of weight matrix \(\bm{W}\) is important, finding a specific \(K\) is in fact an NP-hard problem since Equation (\ref{Eq_general_km}) appears to be a 2nd-order CP format. The following lemma can give the explanation.

\begin{lemma}[Solve \(K\) of \(\bm{W}\) is NP-hard]\label{lem_nphard}
For a known matrix \(\bm{W}\) defined in Equation (\ref{Eq_general_km}), to determine its KT rank \(K\) is NP-hard.
\end{lemma}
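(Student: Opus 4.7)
The plan is to reduce the KT rank determination problem for $\bm{W}$ to the tensor (CP) rank decision problem, whose NP-hardness is classical for tensors of order at least three. By Lemma \ref{lem_rankk}, determining the KT rank $K$ of $\bm{\mathcal{W}}$ is equivalent to finding the smallest $K$ for which the matricization $\bm{W}$ admits the representation in Equation (\ref{Eq_general_km}), subject to the requirement that each $\bm{a}_k \in \mathbb{R}^M$ and $\bm{b}_k \in \mathbb{R}^N$ be the vectorization of a tensor of prescribed shape $m_1 \times \cdots \times m_d$ and $n_1 \times \cdots \times n_d$, respectively. Reading each summand $\bm{a}_k \bm{b}_k^T$ as a rank-1 term, the search for the minimal $K$ is structurally a 2nd-order CP problem endowed with a tensorial shape constraint on the factors.

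First I would make this shape constraint explicit by invoking the multi-indexing of Definition \ref{def_MultiInd} to identify $\bm{W}$ with a $2d$th-order array in which mode $i$ of $\bm{\mathcal{W}}$ is split into the two separate modes $m_i$ and $n_i$. Under this identification, any KT decomposition of $\bm{\mathcal{W}}$ induces, and is induced by, a rank-$K$ decomposition of $\bm{W}$ whose factors respect the prescribed tensorization, and hence by a structured CP-type decomposition of the $2d$th-order reshape. The second step is to exhibit a polynomial reduction from an arbitrary instance of the classical NP-hard CP rank problem to a KT rank instance: given a $d$-way tensor whose CP rank is the witness of the hard instance, choose a compatible partition $(m_i, n_i)$ and assemble the entries into a weight matrix $\bm{W}$ so that the KT rank of $\bm{W}$ agrees with the CP rank of the input tensor.

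Finally, I would appeal to the well-established hardness of CP tensor rank for order $\geq 3$ and conclude that a polynomial-time algorithm for KT rank would furnish a polynomial-time algorithm for CP rank, a contradiction. The main obstacle I anticipate is that Equation (\ref{Eq_general_km}), taken at face value, is merely a matrix rank-$K$ factorization, and matrix rank is polynomially computable via the SVD. The hardness must therefore be extracted from the tensorial constraints that the RKT partition imposes on the factors $\bm{a}_k$ and $\bm{b}_k$, not from the matrix side of the decomposition. Making the reduction tight, so that these constraints genuinely force the difficulty of a higher-order CP instance rather than collapsing the problem to the easy matrix case, is where the delicate work lies.
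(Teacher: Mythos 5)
Your proposal and the paper's proof diverge sharply: the paper's argument is exactly the one-line argument you flagged as the trap to avoid. It observes that each \(\bm{a}_k\bm{b}_k^{\rm T}\) is a rank-1 second-order tensor, declares Equation (\ref{Eq_general_km}) to be ``a standard CP decomposition of \(\bm{W}\),'' and concludes by citing Hillar--Lim. Your unease is well founded: the Hillar--Lim hardness results concern tensors of order at least three, while the CP rank of a second-order tensor coincides with matrix rank and is computable in polynomial time via the SVD. So the paper's own proof does not survive the objection you raise, and your diagnosis that any genuine hardness would have to come from somewhere other than the matrix factorization itself is the right one.

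That said, your proposed repair also has a real gap, and it is more than ``delicate work remains.'' The reduction you sketch --- embed a hard order-\(\geq 3\) CP instance into a weight matrix so that its KT rank equals the CP rank of the input --- collides with the fact that the factor tensors \(\bm{\mathcal{A}}_k\) and \(\bm{\mathcal{B}}_k\) in Equation (\ref{Eq_general_ktd}) are unconstrained arrays: every vector in \(\mathbb{R}^{M}\) is the vectorization of \emph{some} tensor of shape \(m_1\times\cdots\times m_d\), so the ``tensorial shape constraint'' on \(\bm{a}_k\) and \(\bm{b}_k\) is vacuous. Consequently the minimal \(K\) in Equation (\ref{Eq_general_km}) is exactly \({\rm rank}(\bm{W})\) of the rearranged matrix (the classical Van Loan--Pitsianis nearest-Kronecker-product observation), and the problem collapses to precisely the polynomial matrix case you were trying to escape; no choice of partition \((m_i,n_i)\) can make it encode an order-\(\geq 3\) CP instance. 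Any honest hardness statement would have to be about a genuinely constrained variant --- for instance the KCP format of Theorem \ref{Thm_KCP}, where the \(\bm{\mathcal{A}}_k,\bm{\mathcal{B}}_k\) carry low CP rank, or the \(2d\)th-order tensor \(\widetilde{\bm{\mathcal{W}}}\) appearing in Lemma \ref{lem_KTRank} --- rather than about Equation (\ref{Eq_general_km}) as written. In short, neither your sketch nor the paper's proof establishes the lemma as stated, but you correctly located the fault line that the paper steps over.
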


\begin{proof}
For any paired vectors, their spanned matrix is equivalent to their outer product, i.e.,
\begin{equation}
\bm{a}\bm{b}^{\rm T} = \bm{a} \circ \bm{b}
\end{equation}
where \(\circ\) denotes the outer product. Therefore, each \(\bm{a}_{k} \bm{b}_{k}^{\rm T}\) is a rank-1 \(2\)nd-order tensor (matrix), and Equation (\ref{Eq_general_km}) is a standard CP decomposition of \(\bm{W}\). As solving the CP rank is NP-hard \cite{Hillar_2013_NPHard}, this lemma is then proved.
\end{proof}

Even so, we can still work out the bounds of KT rank to help to design our following KCP-RNNs. Inspired by the external Lemma 1 in \cite{Khrulkov_2018_ExpPowerRNN}, we give a lemma bellow to discuss the bounds of \(K\).

\begin{lemma}[Bounds of KT Rank]\label{lem_KTRank}
For a known matrix \(\bm{W}\) defined in Equation (\ref{Eq_general_km}), if it has the KT format defined in Equation (\ref{Eq_general_ktd}), then the bounds of KT rank should be
\begin{equation}
R \leq K \leq C^{(\widetilde{\bm{\mathcal{W}}})}
\end{equation}
where \(R\) is the matrix rank of \(\bm{W}\), \(C^{(\widetilde{\bm{\mathcal{W}}})}\) is the CP rank of the (\(2d\))th-order tensor \(\widetilde{\bm{\mathcal{W}}} \in \mathbb{R} ^{m_{1} \times m_{2} \times \cdots \times m_{d} \times n_{1} \times n_{2} \times \cdots \times n_{d}}\).
\end{lemma}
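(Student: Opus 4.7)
The plan is to establish the two inequalities by separate arguments: the lower bound falls out of an elementary matrix-rank inequality via Lemma \ref{lem_rankk}, while the upper bound requires converting a CP decomposition of $\widetilde{\bm{\mathcal{W}}}$ into a KT decomposition of $\bm{\mathcal{W}}$ through a careful reshaping-and-regrouping step.

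For the lower bound $R \leq K$, I would invoke Lemma \ref{lem_rankk} directly: it rewrites the assumed rank-$K$ KT decomposition of $\bm{\mathcal{W}}$ as $\bm{W} = \sum_{k=1}^{K} \bm{a}_k \bm{b}_k^{\rm T}$ on the matrix level. Each summand $\bm{a}_k \bm{b}_k^{\rm T}$ has matrix rank at most one, so by subadditivity of matrix rank, $R = \mathrm{rank}(\bm{W}) \leq K$. Taking $K$ to be the minimum achievable in Equation (\ref{Eq_general_ktd}) gives the desired bound. This half of the argument is essentially one line.

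The upper bound $K \leq C^{(\widetilde{\bm{\mathcal{W}}})}$ is the substantive direction. The first step is to observe, using Definition \ref{def_MultiInd}, that $\widetilde{\bm{\mathcal{W}}}$ is obtained from $\bm{\mathcal{W}}$ by splitting each mode of size $m_i n_i$ into two modes of sizes $m_i$ and $n_i$, then permuting the resulting $2d$ modes so that all $m$-modes precede all $n$-modes; equivalently, $\bm{\mathcal{W}}$ and $\widetilde{\bm{\mathcal{W}}}$ share identical entries under the index correspondence $\overline{\mu_i \nu_i} \leftrightarrow (\mu_i, \nu_i)$. Next, I would fix a minimal CP decomposition of $\widetilde{\bm{\mathcal{W}}}$ of rank $C := C^{(\widetilde{\bm{\mathcal{W}}})}$,
\[
\widetilde{\bm{\mathcal{W}}} = \sum_{c=1}^{C} \bm{u}_c^{(1)} \circ \cdots \circ \bm{u}_c^{(d)} \circ \bm{v}_c^{(1)} \circ \cdots \circ \bm{v}_c^{(d)},
\]
and group the first $d$ factors into $\bm{\mathcal{A}}_c := \bm{u}_c^{(1)} \circ \cdots \circ \bm{u}_c^{(d)} \in \mathbb{R}^{m_1 \times \cdots \times m_d}$ and the remaining $d$ factors into $\bm{\mathcal{B}}_c := \bm{v}_c^{(1)} \circ \cdots \circ \bm{v}_c^{(d)} \in \mathbb{R}^{n_1 \times \cdots \times n_d}$.

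The main obstacle, and the final step, is verifying entrywise that this regrouping turns into a KT decomposition after the reverse reshape. Concretely, I would compute
\[
\mathcal{A}_c(\mu_1,\ldots,\mu_d)\,\mathcal{B}_c(\nu_1,\ldots,\nu_d) = \prod_{i=1}^{d} u_c^{(i)}(\mu_i) \prod_{i=1}^{d} v_c^{(i)}(\nu_i),
\]
and match this against the entry $(\bm{\mathcal{A}}_c \otimes \bm{\mathcal{B}}_c)(\overline{\mu_1 \nu_1},\ldots,\overline{\mu_d \nu_d})$ by unfolding the definition of the Kronecker product of two $d$th-order tensors; this is exactly the index identification used in the proof of Lemma \ref{lem_rankk}. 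Summing over $c$ then yields $\bm{\mathcal{W}} = \sum_{c=1}^{C} \bm{\mathcal{A}}_c \otimes \bm{\mathcal{B}}_c$, a valid KT decomposition with $C$ terms, so the minimal KT rank satisfies $K \leq C^{(\widetilde{\bm{\mathcal{W}}})}$. Combining both bounds completes the proof.
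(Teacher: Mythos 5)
Your proof is correct, and while the lower bound $R \leq K$ coincides exactly with the paper's argument (subadditivity of matrix rank applied to the rank-one terms $\bm{a}_k\bm{b}_k^{\rm T}$ furnished by Lemma \ref{lem_rankk}), your upper bound takes a genuinely different route. The paper argues indirectly: it identifies $K$ with the CP rank of the matrix $\bm{W}$ via Lemma \ref{lem_nphard}, observes that $\bm{W}$ is the matricization of $\widetilde{\bm{\mathcal{W}}}$ obtained by grouping the $m$-modes against the $n$-modes, and then cites Lemma 1 of \cite{Khrulkov_2018_ExpPowerRNN} (the CP rank of a tensor is at least the rank of any matricization) to conclude $K \leq C^{(\widetilde{\bm{\mathcal{W}}})}$. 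You instead give a direct construction: starting from a minimal rank-$C$ CP decomposition of $\widetilde{\bm{\mathcal{W}}}$, you regroup the first $d$ factor vectors of each term into a rank-one tensor $\bm{\mathcal{A}}_c$ and the last $d$ into $\bm{\mathcal{B}}_c$, and check entrywise---using the same multi-index identification $\overline{\mu_i\nu_i} \leftrightarrow (\mu_i,\nu_i)$ as in the proof of Lemma \ref{lem_rankk}---that $\bm{\mathcal{W}} = \sum_{c=1}^{C}\bm{\mathcal{A}}_c \otimes \bm{\mathcal{B}}_c$ is a valid (restricted) KT decomposition with $C$ terms, whence the minimal $K$ is at most $C$. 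Your version is self-contained and makes the structural reason for the inequality transparent (every CP term of $\widetilde{\bm{\mathcal{W}}}$ is already a Kronecker term of $\bm{\mathcal{W}}$), at the cost of the explicit index bookkeeping; the paper's version is shorter but leans on an external lemma and on the slightly delicate identification of the KT rank with a matrix CP rank. Both establish the claim.
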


\begin{proof}
Firstly, there is
\begin{equation}\label{Eq_thm_1_1}
R = {\rm rank}(\bm{W}) = {\rm rank}\left(\sum_{k=1}^{K}{\bm{a}_{k} \bm{b}_{k}^{\rm T}}\right).
\end{equation}
Secondly, for the matrices \(\bm{a}_{k} \bm{b}_{k}^{\rm T}\), we have
\begin{equation}\label{Eq_thm_1_2}
\begin{aligned}
&{\rm rank}\left(\sum_{k=1}^{K}{\bm{a}_{k} \bm{b}_{k}^{\rm T}}\right) \leq {\rm rank}(\bm{a}_{1} \bm{b}_{1}^{\rm T}) + {\rm rank}(\bm{a}_{2} \bm{b}_{2}^{\rm T}) + \cdots \\
&+ {\rm rank}(\bm{a}_{K} \bm{b}_{K}^{\rm T}) = K.
\end{aligned}
\end{equation}
Thirdly, in terms of Lemma \ref{lem_nphard}, \(K\) is the CP rank of \(\bm{W}\), and \(\bm{W}\) is in fact the matricization of \(\widetilde{\bm{\mathcal{W}}}\) rather than \(\bm{\mathcal{W}}\). According to Lemma 1 in \cite{Khrulkov_2018_ExpPowerRNN}, the CP rank of a tensor is always not less than the rank of its matricization. Therefore, we have
\begin{equation}\label{Eq_thm_1_3}
K \leq C^{(\widetilde{\bm{\mathcal{W}}})}.
\end{equation}
Finally, this lemma is established by combing (\ref{Eq_thm_1_1}), (\ref{Eq_thm_1_2}) and (\ref{Eq_thm_1_3}).
\end{proof}

Trace back to the BT decomposition defined in Equation (\ref{Eq_btd}), there should be \(1 \leq P < C^{(\bm{\mathcal{W}})}\), which hints that the KT rank \(K\) could be higher than the BT rank \(P\) since usually there are \(1 < R\) and \(C^{(\bm{\mathcal{W}})} < C^{(\widetilde{\bm{\mathcal{W}}})}\). This might imply that KT has more potential ability of parallel computing than BT.

\subsection{From KT to KCP}

Suppose the maximum \(m_{i}\) and \(n_{i}\) are \(m\) and \(n\) respectively, it is easy to conclude that the space complexity of Equation (\ref{Eq_general_km}) should be
\begin{equation}\label{Eq_space_comp_1}
\mathcal{O}\left({(m^d+n^d)K}\right)
\end{equation}
which is not very efficient since it still has exponential terms. Fortunately, Lemma 2.1 in \cite{Phan_2013_KTD2} claims that if \(\bm{a}_{k}={\rm vec}(\bm{\mathcal{A}}_{k}) \in \mathbb{R} ^{M}\) and \(\bm{b}_{k}={\rm vec}(\bm{\mathcal{B}}_{k}) \in \mathbb{R} ^{N}\) could still be decomposed as the \(d\)th-order tensors in CP format, \(\bm{\mathcal{W}}\) will become a sparse CP format, i.e., KCP, which can make the space complexity lower. However, the proof of this lemma is still lacked in \cite{Phan_2013_KTD2}, so we give our own proof below.

\begin{theorem}[KCP]\label{Thm_KCP}
For the tensor \(\bm{\mathcal{W}}\) in KT format defined in Equation (\ref{Eq_general_ktd}), if its factor tensors \(\bm{\mathcal{A}}_{k}\) and \(\bm{\mathcal{B}}_{k}\) could be further decomposed in CP format like
\begin{equation}\label{Eq_kt_to_kcp}
\begin{aligned}
&\bm{\mathcal{A}}_{k} = \bm{\mathcal{I}}_{k}^{(\bm{\mathcal{A}})} \bullet \bm{A}_{k}^{(1)} \bullet \bm{A}_{k}^{(2)} \bullet \cdots \bullet \bm{A}_{k}^{(d)} \\
&\bm{\mathcal{B}}_{k} = \bm{\mathcal{I}}_{k}^{(\bm{\mathcal{B}})} \bullet \bm{B}_{k}^{(1)} \bullet \bm{B}_{k}^{(2)} \bullet \cdots \bullet \bm{B}_{k}^{(d)}
\end{aligned}
\end{equation}
where \(\bm{A}_{k}^{(i)} \in \mathbb{R} ^{m_{i} \times C_{k}^{(\bm{\mathcal{A}})}}\), \(\bm{B}_{k}^{(i)} \in \mathbb{R} ^{n_{i} \times C_{k}^{(\bm{\mathcal{B}})}}\), both \(\bm{\mathcal{I}}_{k}^{(\bm{\mathcal{A}})}\) and \(\bm{\mathcal{I}}_{k}^{(\bm{\mathcal{B}})}\) are superdiagonal tensors, \(C_{k}^{(\bm{\mathcal{A}})}\) and \(C_{k}^{(\bm{\mathcal{B}})}\) are the CP ranks of \(\bm{\mathcal{A}}_{k}\) and \(\bm{\mathcal{B}}_{k}\) respectively. Then \(\bm{\mathcal{W}}\) could also be represented in CP format as
\begin{equation}\label{Eq_kcp}
\bm{\mathcal{W}} = \bm{\mathcal{I}}^{(\bm{\mathcal{W}})} \bullet \bm{W}^{(1)} \bullet \bm{W}^{(2)} \bullet \cdots \bullet \bm{W}^{(d)}
\end{equation}
where \(\bm{W}^{(i)} \in \mathbb{R} ^{m_{i}n_{i} \times C^{(\bm{\mathcal{W}}})}\), and \(C^{(\bm{\mathcal{W}})}=\sum_{k=1}^{K}{C_{k}^{(\bm{\mathcal{A}})}C_{k}^{(\bm{\mathcal{B}})}}\) is the CP rank of \(\bm{\mathcal{W}}\). Besides, each \(\bm{W}^{(i)}\) is a sparse partitioned matrix like
\begin{equation}\label{Eq_W_i}
\bm{W}^{(i)} = \begin{bmatrix} \bm{A}_{1}^{(i)} \otimes \bm{B}_{1}^{(i)} & \bm{A}_{2}^{(i)} \otimes \bm{B}_{2}^{(i)} & \cdots & \bm{A}_{K}^{(i)} \otimes \bm{B}_{K}^{(i)} \end{bmatrix}.
\end{equation}
\end{theorem}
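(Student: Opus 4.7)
The plan is to mirror the approach of Lemma \ref{lem_rankk}: pass to entries via the multi-index convention of Definition \ref{def_MultiInd}, substitute the CP expansions of $\bm{\mathcal{A}}_k$ and $\bm{\mathcal{B}}_k$, and then linearize the resulting triple sum into a single CP sum so that the factor matrices $\bm{W}^{(i)}$ can be read off directly.

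First I would express the Kronecker product entrywise. As in Equation (\ref{Eq_lem_1}), the $(\overline{\mu_1\nu_1},\ldots,\overline{\mu_d\nu_d})$ entry of $\bm{\mathcal{A}}_k\otimes\bm{\mathcal{B}}_k$ equals $\mathcal{A}_k(\mu_1,\ldots,\mu_d)\,\mathcal{B}_k(\nu_1,\ldots,\nu_d)$. Substituting the CP forms of Equation (\ref{Eq_kt_to_kcp}) into Equation (\ref{Eq_general_ktd}) then gives
\begin{equation*}
\mathcal{W}(\overline{\mu_1\nu_1},\ldots,\overline{\mu_d\nu_d}) = \sum_{k=1}^{K}\sum_{c_a=1}^{C_k^{(\bm{\mathcal{A}})}}\sum_{c_b=1}^{C_k^{(\bm{\mathcal{B}})}} \prod_{i=1}^{d} A_k^{(i)}(\mu_i,c_a)\,B_k^{(i)}(\nu_i,c_b).
\end{equation*}

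Next I would collapse the triple $(k,c_a,c_b)$ into a single index $c\in\{1,\ldots,C^{(\bm{\mathcal{W}})}\}$ by sweeping over $k$, then $c_a$, then $c_b$, which immediately forces $C^{(\bm{\mathcal{W}})}=\sum_{k=1}^{K} C_k^{(\bm{\mathcal{A}})} C_k^{(\bm{\mathcal{B}})}$. For each such $c$, corresponding to a specific triple, I would define
\[
W^{(i)}(\overline{\mu_i\nu_i},\,c) \;:=\; A_k^{(i)}(\mu_i,c_a)\,B_k^{(i)}(\nu_i,c_b),
\]
and recognize the right-hand side as the $(\overline{\mu_i\nu_i},\overline{c_a c_b})$ entry of the matrix Kronecker product $\bm{A}_k^{(i)}\otimes \bm{B}_k^{(i)}$. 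Horizontally stacking these $K$ Kronecker blocks as $k$ varies reproduces exactly the partitioned structure announced in Equation (\ref{Eq_W_i}), and the entrywise identity becomes $\mathcal{W}(\overline{\mu_1\nu_1},\ldots,\overline{\mu_d\nu_d}) = \sum_{c=1}^{C^{(\bm{\mathcal{W}})}} \prod_{i=1}^{d} W^{(i)}(\overline{\mu_i\nu_i},c)$, which is precisely the CP representation of Equation (\ref{Eq_kcp}) once $\bm{\mathcal{I}}^{(\bm{\mathcal{W}})}$ is taken as the $(d{+}1)$-way superdiagonal tensor of size $C^{(\bm{\mathcal{W}})}$.

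The main obstacle is purely combinatorial bookkeeping with multi-indices: one has to verify that the row reshape $\overline{\mu_i\nu_i}$ induced by the Kronecker product on mode $i$ is consistent with the convention of Definition \ref{def_MultiInd} (as already used in Equation (\ref{Eq_lem_2})), and that the column linearization $\overline{c_a c_b}$ matches the standard column indexing of $\bm{A}_k^{(i)}\otimes \bm{B}_k^{(i)}$. Once these conventions are pinned down consistently the rest is a mechanical reorganization of sums and products. I would also briefly flag that the argument only establishes the inequality $C^{(\bm{\mathcal{W}})}\le\sum_k C_k^{(\bm{\mathcal{A}})} C_k^{(\bm{\mathcal{B}})}$ in general; the equality claim of the theorem is justified by assuming each $\bm{\mathcal{A}}_k$ and $\bm{\mathcal{B}}_k$ is already expressed in minimal CP form, so that the displayed construction furnishes the announced KCP representation.
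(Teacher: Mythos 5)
Your proposal is correct and follows essentially the same route as the paper's proof: both work entrywise via the multi-index convention, expand the Kronecker product of the two CP-decomposed factor tensors into a double sum over $(c_a,c_b)$, and identify $A_k^{(i)}(\mu_i,c_a)B_k^{(i)}(\nu_i,c_b)$ as the $(\overline{\mu_i\nu_i},\overline{c_a c_b})$ entry of $\bm{A}_k^{(i)}\otimes\bm{B}_k^{(i)}$, the paper merely organizing this as a reduction to the $K=1$ case followed by verification of both sides rather than your direct forward substitution. Your closing remark that the construction only bounds the CP rank from above unless the inner CP decompositions are minimal is a point of care the paper's proof does not make explicit.
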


\begin{proof}
The proof can be found in Appendix.
\end{proof}

\subsection{Space Complexity and Compression Ratio}

Although CP is negative to deal with the curse of dimensionality \cite{Cichocki_2018_TensorNetworks}, the KCP format defined in Theorem \ref{Thm_KCP} has sparse and partitioned factor matrices because of the existence of Kronecker products, so the complexity of the superdiagonal kernel tensor \(\bm{\mathcal{I}}^{(\bm{\mathcal{W}})}\) might be roughly ignored by assuming the nonzero entries from \(\bm{\mathcal{I}}^{(\bm{\mathcal{W}})}\) can construct the vector \(\bm{1}_{C^{(\bm{\mathcal{W}}})}\) \cite{Phan_2020_CP1Vector}. If we suppose the maximum CP ranks of \(C_{k}^{(\bm{\mathcal{A}})}\) and \(C_{k}^{(\bm{\mathcal{B}})}\) are \(C^{(\bm{\mathcal{A}})}\) and \(C^{(\bm{\mathcal{B}})}\) respectively, the space complexity of Equation (\ref{Eq_kcp}) is easy to be obtained as
\begin{equation}\label{Eq_space_comp_2}
\mathcal{O}\left(d(mC^{(\bm{\mathcal{A}})}+nC^{(\bm{\mathcal{B}})})K\right)
\end{equation}
which no longer includes exponential terms compared with Equation (\ref{Eq_space_comp_1}). Naturally, the compression ratio of KCP should be
\begin{equation}\label{Eq_comp_ratio}
\Phi = \frac{(mn)^{d}}{d(mC^{(\bm{\mathcal{A}})}+nC^{(\bm{\mathcal{B}})})K}
\end{equation}
which evidently reduces the space complexity of \(\bm{\mathcal{W}}\) from the exponential to the polynomial level.

\section{RNNs in KCP Format}\label{sec:KCPRNN}

In this section, we first give two algorithms to deal with the multiplication between the input tensor \(\bm{\mathcal{X}}\) and the weight tensor \(\bm{\mathcal{W}}\) in KCP format, then discuss the superiority of KCP in the aspect of complexity, finally propose the KCP-RNNs based on LSTMs.

\begin{figure*}
\centering
\includegraphics[width=0.99\textwidth]{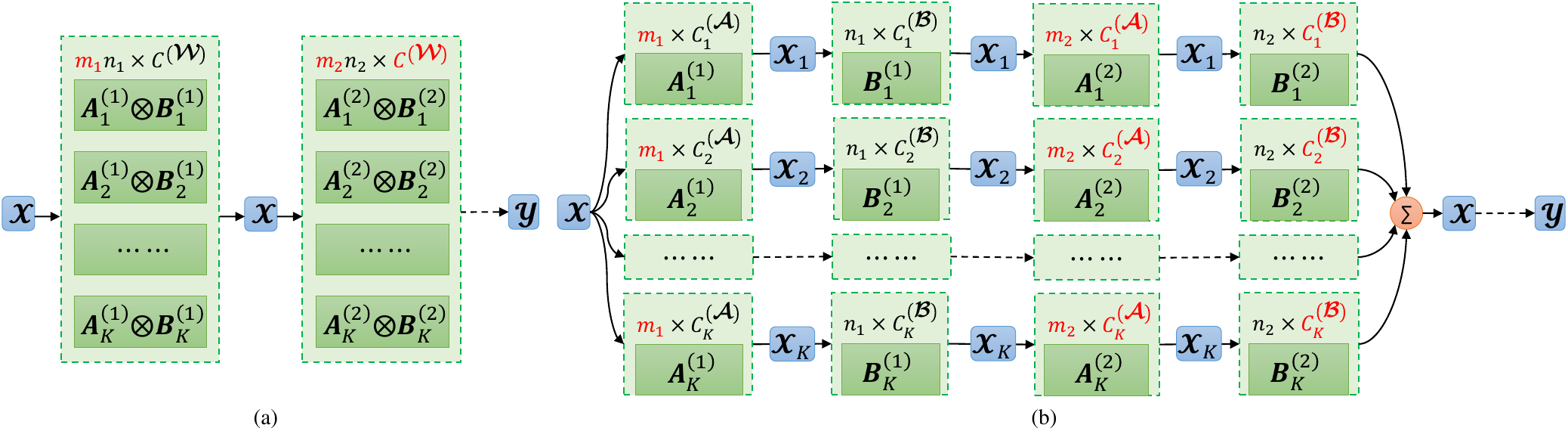}
\caption{Multiplication between input and weight in KCP format according to (a) Algorithm \ref{Alg_KCP} and (b) Algorithm \ref{Alg_KCP_2}. The modes drawn in red should be contracted.}
\label{Fig_multiplication}
\end{figure*}

\subsection{Fast Multiplication between Input and Weight}

Since the normal vector-matrix multiplication is also contraction, the input tensor can compute with the factor matrices defined in Equation (\ref{Eq_kcp}) one by one, and inspired by \cite{Ye_2018_BTD}, making the kernel tensor as the last contracted one is better than coping its original location. Thus the multiplication between \(\bm{\mathcal{X}}\) and \(\bm{\mathcal{W}}\) in KCP format can be represented like
\begin{equation}\label{Eq_multiplication}
\bm{\mathcal{Y}} = \bm{\mathcal{X}} \bullet \bm{W}^{(1)} \bullet \bm{W}^{(2)} \bullet \cdots \bullet \bm{W}^{(d)} \bullet \bm{\mathcal{I}}^{(\bm{\mathcal{W}})}
\end{equation}
where \(\bm{\mathcal{Y}} \in \mathbb{R} ^{n_{1} \times n_{2} \times \cdots \times n_{d}}\) is the output tensor that could be reshaped into \(\bm{y} \in \mathbb{R} ^{N}\). However, with the continuous contractions from left to right in Equation (\ref{Eq_multiplication}), every mode \(C^{(\bm{\mathcal{W}})}\) is accumulated as only \(m_{i}\) is contracted each time, i.e., the mode \(\left(C^{(\bm{\mathcal{W}})}\right)^{d}\) shall be contracted at the last \(\bullet\) in Equation (\ref{Eq_multiplication}), and the corresponding computation complexity should be
\begin{equation}\label{Eq_compcomplex_naive}
\mathcal{O}\left((d{\rm max}\{m,n\}^{d+1}+n^d)\left(C^{(\bm{\mathcal{A}})}C^{(\bm{\mathcal{B}})}K\right)^{d}\right)
\end{equation}
which increases exponentially with the ranks.

Fortunately, the kernel tensor of CP is superdiagonal while that of Tucker is not, so we could contract every \(C^{(\bm{\mathcal{W}})}\) gradually during the procedure of Equation (\ref{Eq_multiplication}) without remarkable influence. Algorithm \ref{Alg_KCP} describes our fast multiplication in detail, and Figure \ref{Fig_multiplication}(a) illustrates this strategy vividly. Note that the kernel tensor \(\bm{\mathcal{I}}^{(\bm{\mathcal{W}})}\) is resolved into each factor matrix, so the curse of dimensionality is solved to some extent.

\begin{algorithm}[htb] 
\caption{Strict Fast Multiplication between Input and Weight in KCP Format.}
\label{Alg_KCP} 
\begin{algorithmic}[1]
\REQUIRE ~~\\
Input data \(\bm{\mathcal{X}} \in \mathbb{R} ^{m_{1} \times m_{2} \times \cdots \times m_{d}}\);\\
KCP-weight \(\bm{\mathcal{W}}\) defined in Equation (\ref{Eq_kcp}).\\
\ENSURE ~~\\
Output data \(\bm{\mathcal{Y}} \in \mathbb{R} ^{n_{1} \times n_{2} \times \cdots \times n_{d}}\) in Equation (\ref{Eq_multiplication}).
\FOR{\(i = 1 \longrightarrow d\)}
\FOR{\(k = 1 \longrightarrow K\)}
\STATE \(\bm{D}_{k}^{(i)} \in \mathbb{R} ^{m_{i}n_{i} \times C_{k}^{(\bm{\mathcal{A}})}C_{k}^{(\bm{\mathcal{B}})}} \longleftarrow \bm{A}_{k}^{(i)} \otimes \bm{B}_{k}^{(i)}\);
\ENDFOR
\STATE \(\bm{W}^{(i)} \in \mathbb{R} ^{m_{i}n_{i} \times C^{(\bm{\mathcal{W}})}} \longleftarrow\) merge all \(\bm{D}_{k}^{(i)}\) according to Equation (\ref{Eq_W_i});
\IF{\(i\) mod \(2 \neq 0\)}
\STATE \(\bm{\mathcal{X}}^{(i)} \in \mathbb{R} ^{n_{1} \cdots n_{i-1}m_{i+1} \cdots m_{d} \times m_{i}} \longleftarrow\) reshape \(\bm{\mathcal{X}}\);
\STATE \(\bm{W}^{(i)} \in \mathbb{R} ^{m_{i} \times n_{i}C^{(\bm{\mathcal{W}})}} \longleftarrow\) reshape \(\bm{W}^{(i)}\);
\STATE \(\bm{\mathcal{X}} \in \mathbb{R} ^{n_{1} \cdots n_{i-1}m_{i+1} \cdots m_{d} \times n_{i}C^{(\bm{\mathcal{W}})}} \longleftarrow \bm{\mathcal{X}}^{(i)} \bullet \bm{W}^{(i)}\);
\ELSIF{\(i\) mod \(2 = 0\)}
\STATE \(\bm{\mathcal{X}}^{(i)} \in \mathbb{R} ^{n_{1} \cdots n_{i-1}m_{i+1} \cdots m_{d} \times m_{i}C^{(\bm{\mathcal{W}})}} \longleftarrow\) reshape \(\bm{\mathcal{X}}\);
\STATE \(\bm{W}^{(i)} \in \mathbb{R} ^{m_{i}C^{(\bm{\mathcal{W}})} \times n_{i}} \longleftarrow\) reshape \(\bm{W}^{(i)}\);
\STATE \(\bm{\mathcal{X}} \in \mathbb{R} ^{n_{1} \cdots n_{i-1}m_{i+1} \cdots m_{d} \times n_{i}} \longleftarrow \bm{\mathcal{X}}^{(i)} \bullet \bm{W}^{(i)}\);
\ENDIF
\ENDFOR
\IF{\(d\) mode \(2 \neq 0\)}
\STATE \(\bm{\mathcal{Y}} \longleftarrow \bm{\mathcal{X}} \bullet \bm{i}^{(\bm{\mathcal{W}})}\), vector \(\bm{i}^{(\bm{\mathcal{W}})} \in \mathbb{R} ^{C^{(\bm{\mathcal{W}})}}\) from \(\bm{\mathcal{I}}^{(\bm{\mathcal{W}})}\) should be contracted to remove the mode \(C^{(\bm{\mathcal{W}})}\);
\ELSE
\STATE \(\bm{\mathcal{Y}} \longleftarrow \bm{\mathcal{X}}\);
\ENDIF
\RETURN \(\bm{\mathcal{Y}}\).
\end{algorithmic}
\end{algorithm}

Nevertheless, in theory, computation complexity of Algorithm \ref{Alg_KCP} is not efficient enough, because the considerable accumulation of CP ranks \(C^{(\bm{\mathcal{W}})}=\sum_{k=1}^{K}{C_{k}^{(\bm{\mathcal{A}})}C_{k}^{(\bm{\mathcal{B}})}}\) might make the size of \(\bm{W}^{(i)}\) very large. Therefore, a more relaxed strategy, which breaks the barrier between contraction \(\bullet\) and Kronecker product \(\otimes\) to let the input \(\bm{\mathcal{X}}\) pass each \(\bm{A}_{k}^{(i)}\) and \(\bm{B}_{k}^{(i)}\) in sequence, should be taken into account. In detail, for \(\bm{\mathcal{Y}} = \bm{\mathcal{X}} \bullet \bm{W}^{(1)} \bullet \bm{W}^{(2)}\) which shall be reprocessed to split the CP rank \(C^{(\bm{\mathcal{W}})}\), each \(\bm{A}_{k}^{(i)}\) and \(\bm{B}_{k}^{(i)}\) should be multiplied separately. This procedure could ensure the output \(\bm{\mathcal{Y}}\) invariant according to the theorem below.

\begin{theorem}[Relaxed Fast Multiplication]\label{Thm_Multiply}
For the input \(\bm{\mathcal{X}} \in \mathbb{R} ^{m_1 \times m_2}\) and the output \(\bm{\mathcal{Y}} = \bm{\mathcal{X}} \bullet \bm{W}^{(1)} \bullet \bm{W}^{(2)}\) which is dealt with by Algorithm \ref{Alg_KCP}, \(\bm{\mathcal{Y}}\) could also be obtained as
\begin{equation}\label{Eq_RelaxFast}
\begin{aligned}
& \bm{\mathcal{X}}_{k} = \left(\left(\left(\left(\bm{\mathcal{X}} \bullet \bm{A}_{k}^{(1)}\right) \otimes \bm{B}_{k}^{(1)} \right) \bullet {\rm vec}(\bm{A}_{k}^{(2)}) \right) \bullet \left(\bm{B}_{k}^{(2)}\right)^{\rm T}\right)\\
& \bm{\mathcal{Y}} = \sum _{k} {\bm{\mathcal{X}}_{k}}.
\end{aligned}
\end{equation}
\end{theorem}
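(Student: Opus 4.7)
The plan is to verify Equation (\ref{Eq_RelaxFast}) by expanding both sides entry-wise using Definition \ref{def_MultiInd} and then matching the two expressions term by term, exploiting the block structure of $\bm{W}^{(i)}$ exposed in Theorem \ref{Thm_KCP}. First I would rewrite $\bm{\mathcal{Y}} = \bm{\mathcal{X}} \bullet \bm{W}^{(1)} \bullet \bm{W}^{(2)}$ (with the superdiagonal kernel $\bm{\mathcal{I}}^{(\bm{\mathcal{W}})}$ contracted implicitly, as realised in Algorithm \ref{Alg_KCP}) in the entry form
\begin{equation*}
\mathcal{Y}(\nu_{1},\nu_{2}) = \sum_{\mu_{1},\mu_{2},c} \mathcal{X}(\mu_{1},\mu_{2}) W^{(1)}(\overline{\mu_{1}\nu_{1}},c) W^{(2)}(\overline{\mu_{2}\nu_{2}},c),
\end{equation*}
and then split the single CP index $c$ according to Equation (\ref{Eq_W_i}) into the triple $(k,c_{A},c_{B})$ with $c_{A}\in\{1,\ldots,C_{k}^{(\bm{\mathcal{A}})}\}$ and $c_{B}\in\{1,\ldots,C_{k}^{(\bm{\mathcal{B}})}\}$. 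The Kronecker structure of each block then forces $W^{(i)}(\overline{\mu_{i}\nu_{i}},c) = A_{k}^{(i)}(\mu_{i},c_{A})\,B_{k}^{(i)}(\nu_{i},c_{B})$.

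The key observation is that once this substitution is made, the summations over $c_{A}$ and the $\mu_{i}$ indices involve only the $\bm{A}$-factors, while the summation over $c_{B}$ involves only the $\bm{B}$-factors, so the $k$-th summand separates as
\begin{equation*}
\Bigl(\sum_{c_{A},\mu_{1},\mu_{2}} \mathcal{X}(\mu_{1},\mu_{2}) A_{k}^{(1)}(\mu_{1},c_{A}) A_{k}^{(2)}(\mu_{2},c_{A})\Bigr) \Bigl(\sum_{c_{B}} B_{k}^{(1)}(\nu_{1},c_{B}) B_{k}^{(2)}(\nu_{2},c_{B})\Bigr).
\end{equation*}
I would then identify each operation in the chain defining $\bm{\mathcal{X}}_{k}$ with a piece of this decomposition: $\bm{\mathcal{X}} \bullet \bm{A}_{k}^{(1)}$ is the $m_{2}\times C_{k}^{(\bm{\mathcal{A}})}$ matrix whose $(\mu_{2},c_{A})$-entry is $\sum_{\mu_{1}} \mathcal{X}(\mu_{1},\mu_{2}) A_{k}^{(1)}(\mu_{1},c_{A})$; the Kronecker with $\bm{B}_{k}^{(1)}$ appends the two new indices $(\nu_{1},c_{B})$ multiplicatively; the contraction with $\mathrm{vec}(\bm{A}_{k}^{(2)})$ simultaneously eliminates the paired $(\mu_{2},c_{A})$ modes, collapsing the $\bm{A}$-part into a scalar that multiplies $B_{k}^{(1)}(\nu_{1},c_{B})$; and the final product with $(\bm{B}_{k}^{(2)})^{\mathrm{T}}$ contracts $c_{B}$ and yields the $n_{1}\times n_{2}$ matrix matching the second parenthesised factor. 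Summing over $k$ then recovers $\bm{\mathcal{Y}}$.

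The main obstacle I anticipate is bookkeeping rather than genuine algebra: one must keep careful track of which indices are being contracted at each of the four operations composing $\bm{\mathcal{X}}_{k}$, and in particular interpret $\mathrm{vec}(\bm{A}_{k}^{(2)})$ as the vectorisation that pairs $\mu_{2}$ with $c_{A}$ in the exact order produced by the preceding Kronecker step, otherwise the contraction mixes incompatible modes. Once the indexing conventions are pinned down consistently with Definition \ref{def_MultiInd}, the identity reduces to the mixed-product property of the Kronecker product together with the trivial factorisation of a double sum whose summand is a product of independent indices.
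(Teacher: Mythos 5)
Your proposal is correct and follows essentially the same route as the paper's own proof: both expand \(\mathcal{Y}(\nu_1,\nu_2)\) entry-wise, split the CP index \(c\) into the block triple \((k,c_A,c_B)\) via the partitioned column structure of Equation (\ref{Eq_W_i}), and apply the Kronecker entry identity \(D_k^{(i)}(\overline{\mu_i\nu_i},\overline{c_A c_B})=A_k^{(i)}(\mu_i,c_A)B_k^{(i)}(\nu_i,c_B)\) before matching against the entry form of Equation (\ref{Eq_RelaxFast}). Your additional observation that the \(k\)-th summand factorizes into an \(\bm{A}\)-only and a \(\bm{B}\)-only sum is a slightly cleaner way to justify the order of the four operations, but it is the same underlying argument.
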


\begin{proof}
The proof can be found in Appendix.
\end{proof}

It should be emphasized that Theorem \ref{Thm_Multiply} can only deal with \(\bm{\mathcal{X}} \bullet \bm{W}^{(1)} \bullet \bm{W}^{(2)}\). In other words, it cannot be naively extended to \(\bm{\mathcal{X}} \bullet \bm{W}^{(1)} \bullet \bm{W}^{(2)} \bullet \bm{W}^{(3)} \bullet \bm{W}^{(4)}\) and any longer case that \(d > 4\), because there is no inherent contraction between \(\bm{W}^{(2)}\) and \(\bm{W}^{(3)}\) as shown in Figure \ref{Fig_mul_tng}(b). Accordingly, \(\sum _{k} {\bm{\mathcal{X}}_{k}}\) must be done when every even number of \(i\) is passed.

\begin{algorithm}[htb] 
\caption{Relaxed Fast Multiplication between Input and Weight in KCP Format (\(d\) is even).}
\label{Alg_KCP_2} 
\begin{algorithmic}[1]
\REQUIRE ~~\\
Input data \(\bm{\mathcal{X}} \in \mathbb{R} ^{m_{1} \times m_{2} \times \cdots \times m_{d}}\);\\
KCP-weight \(\bm{\mathcal{W}}\) defined in Equation (\ref{Eq_kcp}).\\
\ENSURE ~~\\
Output data \(\bm{\mathcal{Y}} \in \mathbb{R} ^{n_{1} \times n_{2} \times \cdots \times n_{d}}\) in Equation (\ref{Eq_multiplication}).
\FOR{\(i = 1 \longrightarrow d\)}
\FOR{\(k = 1 \longrightarrow K\)}
\IF{\(i\) mod \(2 \neq 0\)}
\STATE \(\bm{\mathcal{X}}_{k} \longleftarrow \bm{\mathcal{X}}\)
\STATE \(\bm{\mathcal{X}}_{k} \in \mathbb{R} ^{n_{1} \cdots n_{i-1}m_{i+1} \cdots m_{d} \times m_{i}} \longleftarrow\) reshape \(\bm{\mathcal{X}}_{k}\);
\STATE \(\bm{\mathcal{X}}_{k} \in \mathbb{R} ^{n_{1} \cdots n_{i-1}m_{i+1} \cdots m_{d} \times C_{k}^{(\bm{\mathcal{A}})}} \longleftarrow \bm{\mathcal{X}}_{k} \bullet \bm{A}_{k}^{(i)}\);
\STATE \(\bm{\mathcal{X}}_{k} \in \mathbb{R} ^{n_{1} \cdots n_{i-1}m_{i+1} \cdots m_{d}n_{i} \times C_{k}^{(\bm{\mathcal{A}})}C_{k}^{(\bm{\mathcal{B}})}} \longleftarrow \bm{\mathcal{X}}_{k} \otimes \bm{B}_{k}^{(i)}\);
\ELSIF{\(i\) mod \(2 = 0\)}
\STATE \(\bm{\mathcal{X}}_{k} \in \mathbb{R} ^{n_{1} \cdots n_{i-1}m_{i+1} \cdots m_{d} \times C_{k}^{(\bm{\mathcal{B}})} \times m_{i}C_{k}^{(\bm{\mathcal{A}})}} \longleftarrow\) reshape \(\bm{\mathcal{X}}_{k}\);
\STATE \(\bm{\mathcal{X}}_{k} \in \mathbb{R} ^{n_{1} \cdots n_{i-1}m_{i+1} \cdots m_{d} \times C_{k}^{(\bm{\mathcal{B}})}} \longleftarrow \bm{\mathcal{X}}_{k} \bullet {\rm vec}(\bm{A}_{k}^{(i)})\);
\STATE \(\bm{\mathcal{X}}_{k} \in \mathbb{R} ^{n_{1} \cdots n_{i-1}m_{i+1} \cdots m_{d} \times n_{i}} \longleftarrow \bm{\mathcal{X}}_{k} \bullet \left(\bm{B}_{k}^{(i)}\right)^{\rm T}\);
\ENDIF
\ENDFOR
\IF{\(i\) mod \(2 = 0\)}
\STATE \(\bm{\mathcal{X}} \longleftarrow \sum _{k} {\bm{\mathcal{X}}_{k}}\)
\ENDIF
\ENDFOR
\RETURN \(\bm{\mathcal{Y}} \longleftarrow \bm{\mathcal{X}}\).
\end{algorithmic}
\end{algorithm}

Specifically, Algorithm \ref{Alg_KCP_2} describes the entire procedure of repeatedly using Theorem \ref{Thm_Multiply} to calculate \(\bm{\mathcal{X}} \bullet \bm{\mathcal{W}}\), which can also be seen in Figure \ref{Fig_multiplication}(b) vividly. Nevertheless, stacking \(K\) matrices \(\bm{A}_{k}^{(i)}\) or \(\bm{B}_{k}^{(i)}\) to a single one just like Equation (\ref{Eq_W_i}) is not suggested, since \(KC^{(\bm{\mathcal{A}})} \cdot KC^{(\bm{\mathcal{B}})}\) operations will be produced during the Kronecker product in Equation (\ref{Eq_RelaxFast}), contrarily, only \(KC^{(\bm{\mathcal{A}})}C^{(\bm{\mathcal{B}})}\) operations are needed in Algorithm \ref{Alg_KCP_2}.

\subsection{Computation Complexity}

It is easy to work out that the whole computation complexity of Algorithm \ref{Alg_KCP} should be
\begin{equation}\label{Eq_compcomplex}
\mathcal{O}\left(d{\rm max}\{m,n\}^{d+1}C^{(\bm{\mathcal{A}})}C^{(\bm{\mathcal{B}})}K\right).
\end{equation}
This result is more efficient than Equation (\ref{Eq_compcomplex_naive}), and is very close to the computation complexity of the multiplication in TT format, which is about \(\mathcal{O}\left(d{\rm max}\{m,n\}^{d+1}r^{2}\right)\) according to \cite{Novikov_2015_TT} where \(r\) is the maximum TT rank. The underlying reason from the other perspective is that Algorithm \ref{Alg_KCP} allows the input \(\bm{\mathcal{X}}\) to connect to all the factor matrices \(\bm{W}^{(i)}\) directly as shown in Figure \ref{Fig_mul_tng}. Thanks to the characteristic of CP, we could assume the nonzero superdiagonal line in \(\bm{\mathcal{I}}^{(\bm{\mathcal{W}})}\) is \(\bm{1}_{C^{(\bm{\mathcal{W}}})}\) \cite{Phan_2020_CP1Vector} and can be resolved into each \(\bm{W}^{(i)}\), so that every \(C^{(\bm{\mathcal{W}})}\) could be contracted one by one to relax the curse of dimensionality of CP.

\begin{figure}
\centering
\includegraphics[width=0.49\textwidth]{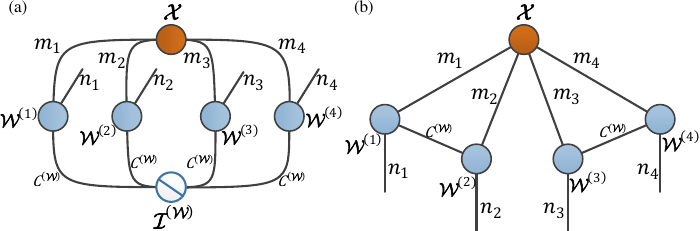}
\caption{Tensor network graphs of multiplication between input and weight in KCP format under (a) naive strategy and (b) Algorithm \ref{Alg_KCP}.}
\label{Fig_mul_tng}
\end{figure}

However, as we have already mentioned, the relaxed fast multiplication defined in Theorem \ref{Thm_Multiply} has more efficient computation complexity, since any accumulation of \(C^{(\bm{\mathcal{A}})}\) and \(C^{(\bm{\mathcal{B}})}\) is avoided. Therefore, Algorithm \ref{Alg_KCP_2} is faster than Algorithm \ref{Alg_KCP} under most circumstances at least in theory. The theorem below gives the specific computation complexity of Algorithm \ref{Alg_KCP_2} and explains that its upper bound will not exceed Equation (\ref{Eq_compcomplex}).

\begin{theorem}[Relaxed Computation Complexity]\label{Thm_CompComp}
The computation complexity of Algorithm \ref{Alg_KCP_2} should be
\begin{equation}\label{Eq_compcomplex_relax}
\mathcal{O}\left(d{\rm max}\{m,n\}^{d}\left(\frac{1}{2}C^{(\bm{\mathcal{A}})} + \frac{1}{2}C^{(\bm{\mathcal{B}})} + C^{(\bm{\mathcal{A}})}C^{(\bm{\mathcal{B}})}\right)K\right),
\end{equation}
whose upper bound is defined in Equation (\ref{Eq_compcomplex}) which is the computation complexity of Algorithm \ref{Alg_KCP}.
\end{theorem}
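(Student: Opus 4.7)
The plan is to count floating-point operations per iteration of Algorithm \ref{Alg_KCP_2} directly, aggregate them over $k$ and $i$, and then compare the result with Equation (\ref{Eq_compcomplex}).

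First I would track the shape of the running tensor. At the start of outer iteration $i$, $\bm{\mathcal{X}}$ carries the shape $n_1 \times \cdots \times n_{i-1} \times m_i \times \cdots \times m_d$, so the reshapes on lines 5 and 9 are purely reinterpretations of memory and cost no multiplications. Writing $P := n_1 \cdots n_{i-1} m_{i+1} \cdots m_d = \Theta(\max\{m,n\}^{d-1})$ for the spectator product, for odd $i$ the matrix product $\bm{\mathcal{X}}_k \bullet \bm{A}_{k}^{(i)}$ costs $P m_i C_k^{(\bm{\mathcal{A}})}$ operations and the Kronecker product $\bm{\mathcal{X}}_k \otimes \bm{B}_{k}^{(i)}$ costs $|\bm{\mathcal{X}}_k|\cdot|\bm{B}_k^{(i)}| = P n_i C_k^{(\bm{\mathcal{A}})} C_k^{(\bm{\mathcal{B}})}$; for even $i$ the contraction against ${\rm vec}(\bm{A}_k^{(i)})$ costs $P m_i C_k^{(\bm{\mathcal{A}})} C_k^{(\bm{\mathcal{B}})}$ and the contraction against $(\bm{B}_k^{(i)})^{\rm T}$ costs $P n_i C_k^{(\bm{\mathcal{B}})}$. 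The outer summation on line 15 contributes only $K P n_i$ per even $i$, which is absorbed in the dominant terms.

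Next I would upper-bound every $m_i, n_i$ by $\max\{m,n\}$ and every $C_k^{(\bm{\mathcal{A}})}, C_k^{(\bm{\mathcal{B}})}$ by $C^{(\bm{\mathcal{A}})}, C^{(\bm{\mathcal{B}})}$. Each odd $i$ then contributes at most $K \max\{m,n\}^{d}\bigl(C^{(\bm{\mathcal{A}})} + C^{(\bm{\mathcal{A}})}C^{(\bm{\mathcal{B}})}\bigr)$ and each even $i$ contributes at most $K \max\{m,n\}^{d}\bigl(C^{(\bm{\mathcal{A}})}C^{(\bm{\mathcal{B}})} + C^{(\bm{\mathcal{B}})}\bigr)$. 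Since $d$ is even there are $d/2$ of each parity, and summing collects to
\begin{equation*}
\frac{d}{2} K \max\{m,n\}^{d}\bigl(C^{(\bm{\mathcal{A}})} + C^{(\bm{\mathcal{B}})} + 2C^{(\bm{\mathcal{A}})}C^{(\bm{\mathcal{B}})}\bigr),
\end{equation*}
which is exactly Equation (\ref{Eq_compcomplex_relax}); the factor $\tfrac{1}{2}$ emerges simply from averaging the two parities over the $d$ iterations. Finally, because $C^{(\bm{\mathcal{A}})}, C^{(\bm{\mathcal{B}})} \geq 1$ the parenthesised term is at most $3C^{(\bm{\mathcal{A}})}C^{(\bm{\mathcal{B}})} \leq 3\max\{m,n\}\,C^{(\bm{\mathcal{A}})}C^{(\bm{\mathcal{B}})}$, which yields the bound in Equation (\ref{Eq_compcomplex}).

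The main obstacle is the bookkeeping: one must verify carefully that each reshape costs nothing, that the Kronecker product on line 7 indeed costs the full product of the two operand sizes rather than some contracted quantity, and that the summation step on line 15 is genuinely a lower-order term. Once these shapes are pinned down, the claimed complexity falls out of a clean sum over the two parities of $i$.
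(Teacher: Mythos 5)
Your proposal is correct and follows essentially the same route as the paper: count the four per-step costs (contraction with $\bm{A}_{k}^{(i)}$, Kronecker product with $\bm{B}_{k}^{(i)}$, contraction with ${\rm vec}(\bm{A}_{k}^{(i)})$, contraction with $(\bm{B}_{k}^{(i)})^{\rm T}$), sum over the $K$ branches and the $d/2$ odd/even pairs, and then dominate the bracket by $\max\{m,n\}\,C^{(\bm{\mathcal{A}})}C^{(\bm{\mathcal{B}})}$ to recover Equation (\ref{Eq_compcomplex}). Your per-operation bookkeeping (tracking the spectator product $P$, noting reshapes and the line-15 summation are lower order) is in fact slightly more explicit than the paper's, and your constant-factor bound on the bracket is looser than the paper's exact inequality via $\max\{m,n\}\geq 2$ but equally valid in big-$\mathcal{O}$ terms.
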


\begin{proof}
The proof can be found in Appendix.
\end{proof}

Undoubtedly, the computation complexity of either Equation (\ref{Eq_compcomplex}) or (\ref{Eq_compcomplex_relax}) is much lower than that in Equation (\ref{Eq_compcomplex_naive}). Still and all, because of the variant and elusive hardware/software environment, Algorithm \ref{Alg_KCP_2} does not always run faster than Algorithm \ref{Alg_KCP}, particularly in Python framework. We suggest to choose the appropriate algorithm based on the physical truth.

\begin{table}[!htbp]
\caption{Comparison of space and computation complexity among the weight matrix in Ori.(original), TT, BT, TR, HT and KCP format.}
\label{Table_complex_comparison}
\centering
\setlength{\tabcolsep}{1pt}
\begin{tabular}{l l l}
\hline
Format & Space & Computation \\
\hline
Ori. & \(\mathcal{O}\left((mn)^{d}\right)\) & \(\mathcal{O}\left((mn)^{d}\right)\) \\
TT & \(\mathcal{O}\left((d-2)mnr^{2}+2mnr\right)\) & \(\mathcal{O}\left(d{\rm max}\{m,n\}^{d+1}r^{2}\right)\) \\
BT & \(\mathcal{O}\left((dmnr+r^{d})P\right)\) & \(\mathcal{O}\left((d{\rm max}\{m,n\}^{d+1}+n^{d})r^{d}P\right)\) \\
TR & \(\mathcal{O}\left(d(m+n)r^{2}\right)\) & \(\mathcal{O}\left(d(m^{d}+n^{d})r^{3}\right)\) \\
HT & \(\mathcal{O}\left((d-1)r^{3}+dmnr\right)\) & \(\mathcal{O}\left((2d-1){\rm max}\{m,n\}^{d+1}r^{1+{\rm log}_{2}d}\right)\) \\
KCP & \(\mathcal{O}\left(d(m+n)rK\right)\) & \(\mathcal{O}\left(d{\rm max}\{m,n\}^{d}(r+r^2)K\right)\) \\
\hline
\end{tabular}
\end{table}

To highlight the possible superiority of KCP in the aspect of complexity, we make Table \ref{Table_complex_comparison} to give the complete comparison of space and computation complexity among TT, BT, TR, HT and our KCP according to Equation (\ref{Eq_tt}), (\ref{Eq_btd}), (\ref{Eq_tr}), and (\ref{Eq_ht}), respectively. For expressing concisely, we denote \(r=C^{\bm{\mathcal{A}}}=C^{\bm{\mathcal{B}}}\) for KCP. It is easy to realize that in Table \ref{Table_complex_comparison} the complexity of KCP might be lower than other kinds of tensor decomposition, since there is no \(mn\) and \(r\) with high power in the space complexity of KCP, and no combination of \({\rm max}\{m,n\}^{d+1}\) and \(r\) with high power in the computation complexity of KCP. 

\begin{figure}[!htbp]
\centering
\subfigure[]{\includegraphics[width=0.4\textwidth]{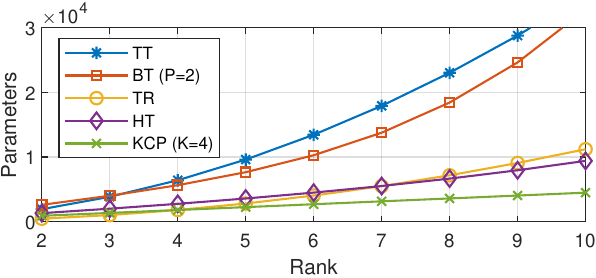}}
\subfigure[]{\includegraphics[width=0.4\textwidth]{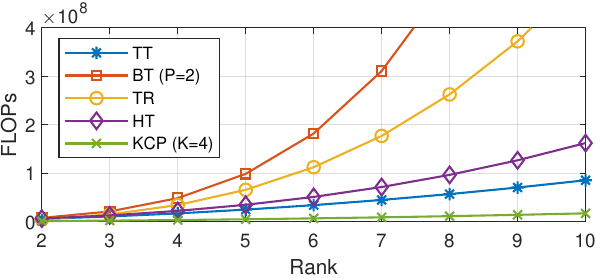}}
\caption{Variation curves of (a) parameters and (b) FLOPs in regard to ranks of TT, BT, TR, HT and KCP. For drawing easily, we let \(d=4\), \(m=20\), \(n=8\), and \(r=C^{\bm{\mathcal{A}}}=C^{\bm{\mathcal{B}}}\).}
\label{Fig_complexity}
\end{figure}

For comparing more intuitively, we also display Figure \ref{Fig_complexity} to check their trends of concrete parameters and FLOPs as the rank grows. Specifically, we assume \(P=2\) for BT according to \cite{Ye_2018_BTD} and \(K=4\) for KCP. Synthetically, both space and computation complexity of KCP are obviously more efficient than other tensor formats especially under high ranks. That is, our KCP takes making both space and computation complexity efficient in consideration.

\subsection{KCP-RNN}

Following the related works \cite{Yang_2017_TTRNN,Ye_2018_BTD,Pan_2019_TRRNN,Yin_2020_HTRNN}, in most cases only input matrices \(\bm{W}_{\theta}\) defined in Equation (\ref{Eq_lstm}) are necessary to be compressed for tensor-decomposed RNNs. Thus, making Algorithm \ref{Alg_KCP} as an example, the multiplication \(\bm{W}_{\theta}\bm{x}(t)\) could be transformed into
\begin{equation}\label{Eq_kcp_rnn}
{\rm KCP}(\bm{W}_{\theta}, \bm{x}(t)) = \bm{\mathcal{X}}(t) \bullet \bm{W}_{\theta}^{(1)} \bullet \bm{W}_{\theta}^{(2)} \bullet \cdots \bullet \bm{W}_{\theta}^{(d)}
\end{equation}
which is similar to Equation (\ref{Eq_multiplication}) but the last unnecessary \(\bm{\mathcal{I}}^{(\bm{\mathcal{W}})}\) is omitted because here we mainly consider Algorithm \ref{Alg_KCP} and \(d\) is even. Note that \({\rm KCP}(\bm{W}_{\theta}, \bm{x}(t))\) could also be computed under Algorithm \ref{Alg_KCP_2} which is cumbersome to describe in formulation. Naturally, not only LSTMs, Equation (\ref{Eq_kcp_rnn}) can also be used to compress GRUs.

\section{Experiments}\label{sec:Exp}

\begin{table*}
\caption{Comparison of performance among tensor-decomposed LSTMs in original, TT, BT, TR, HT and KCP format based on UCF11 dataset. ``Avg Acc'' is the average accuracy of the last 100 epochs in our own experiments. ``Parameters'' records the parameters of the matrices \(\bm{W}_{\theta}\), but ``MFLOPs'' represents the operations of the whole LSTM. Note that both the best and the second best indicators are highlighted.}
\label{Table_ucf11_comparison_1}
\centering
\setlength{\tabcolsep}{10pt}
\begin{tabular}{l l l l l l l l}
\hline
Performance & Ori. & TT & BT & TR & HT & (4,4,2)-KCP & (4,2,2)-KCP \\
\hline
Top-1 Acc (Ours) & 71.9 & 87.2 & \textbf{88.1} & 86.9 & 87.8 & \textbf{88.1} & 86.9 \\
Avg Acc (Ours) & 67.4 & 85.2 & \textbf{85.7} & 84.5 & 85.1 & \textbf{86.0} & 84.3 \\
Parameters & 58,982,400 & 11,904 & 10,496 & 6,300 & 4,992 & \textbf{4,736} & \textbf{2,624} \\
Compression Ratio & - & 4,955\(\times\) & 5,620\(\times\) & 9,362\(\times\) & 11,815\(\times\) & \textbf{12,454\(\times\)} & \textbf{22,478\(\times\)} \\
MFLOPs & 335.5 & 77.2 & 264.8 & 211.7 & 125.4 & \textbf{73.1} & \textbf{37.9} \\
\hline
\end{tabular}
\end{table*}

\begin{table*}
\caption{Comparison of performance among tensor-decomposed LSTMs in original, TT, BT, TR, HT and KCP format based on Youtube Celebrities Face dataset.}
\label{Table_ycf_comparison_1}
\centering
\setlength{\tabcolsep}{10pt}
\begin{tabular}{l l l l l l l l}
\hline
Performance & Ori. & TT & BT & TR & HT & (4,4,2)-KCP & (4,2,2)-KCP \\
\hline
Top-1 Acc (Ours) & 75.4 & 86.4 & 84.3 & \textbf{86.6} & \textbf{86.6} & 85.3 & 85.3 \\
Avg Acc (Ours) & 70.8 & 85.0 & 83.0 & \textbf{85.5} & \textbf{85.9} & 83.8 & 83.9 \\
Parameters & 58,982,400 & 12,800 & 12,288 & 6,300 & 5,888 & \textbf{5,632} & \textbf{3,072} \\
Compression Ratio & - & 4,608\(\times\) & 4,800\(\times\) & 9,362\(\times\) & 10,017\(\times\) & \textbf{10,473\(\times\)} & \textbf{19,200\(\times\)} \\
MFLOPs & 355.5 & 130.8 & 480.7 & 211.7 & 227.2 & \textbf{122.4} & \textbf{63.1} \\
\hline
\end{tabular}
\end{table*}

\begin{table*}
\caption{Comparison of performance among tensor-decomposed LSTMs in original, TT, BT, TR, HT and KCP format based on UCF50 dataset.}
\label{Table_ucf50_comparison_1}
\centering
\setlength{\tabcolsep}{10pt}
\begin{tabular}{l l l l l l l l l}
\hline
Performance & Ori. & TT & BT & TR & HT & (6,4,4)-KCP & (6,4,2)-KCP & (6,2,2)-KCP \\
\hline
Top-1 Acc (Ours) & 66.9 & 86.6 & \textbf{87.1} & 86.2 & 84.4 & \textbf{87.1} & \textbf{87.1} & 85.9 \\
Avg Acc (Ours) & 65.4 & 85.0 & 85.2 & 84.5 & 82.2 & \textbf{85.4} & \textbf{85.3} & 84.1 \\
Parameters & 530,841,600 & 33,408 & 31,104 & 10,032 & 12,960 & 8,640 & \textbf{7,296} & \textbf{4,320} \\
Compression Ratio & - & 15,890\(\times\) & 17,067\(\times\) & 52,915\(\times\) & 40,960\(\times\) & 61,440\(\times\) & \textbf{72,758\(\times\)} & \textbf{122,880\(\times\)} \\
MFLOPs & 3312.6 & 417.7 & 3,588.6 & 419.1 & 609.4 & 336.8 & \textbf{252.0} & \textbf{191.8} \\
\hline
\end{tabular}
\end{table*}

This section verifies the efficiency of our KCP-RNNs on UCF11, Youtube Celebrities Face, and UCF50 datasets, which have already become the benchmarks to test tensor-decomposed RNNs. All the experiments here are executed in TensorFlow. The KCP-RNNs follow the proposed Algorithm 1 since Algorithm 2 has poor running time on Python platform.

\subsection{Experiments on UCF11}

All of the related tensor-decomposed works have tested their methods to compress LSTMs on UCF11 dataset \cite{Yang_2017_TTRNN,Ye_2018_BTD,Pan_2019_TRRNN,Yin_2020_HTRNN,Wu_2020_Hybrid}, so we mainly follow them to experiment our KCP-LSTM based on this dataset. Besides, most of the training settings are also the same with these existing works.

In detail, we resize each sampled video frame from 320\(\times\)240\(\times\)3 to 160\(\times\)120\(\times\)3=57600 as a vector, and in total 6 frames are sampled as an input data. Naturally, there are 6 units along the temporal direction in the KCP-LSTM, whose single hidden layer has 256 neurons. The tensorizing shapes of the input and output are respectively set as 8\(\times\)20\(\times\)20\(\times\)18 and 4\(\times\)4\(\times\)4\(\times\)4, which follow \cite{Yang_2017_TTRNN,Ye_2018_BTD}. Correspondingly, KCP ranks are set into two groups, one contains \(K=4\), \(C^{(\bm{\mathcal{A}})}=4\), \(C^{(\bm{\mathcal{B}})}=2\) (termed as (4,4,2)-KCP), the other is lighter with \(K=4\), \(C^{(\bm{\mathcal{A}})}=2\), \(C^{(\bm{\mathcal{B}})}=2\) (termed as (4,2,2)-KCP). We also select the Adam optimizer and set the initial learning rate as 0.001 \cite{Yang_2017_TTRNN,Ye_2018_BTD,Pan_2019_TRRNN,Yin_2020_HTRNN}. The number of training epochs is 400, which is utilized by \cite{Ye_2018_BTD,Pan_2019_TRRNN} and better than traditional 100.

\begin{figure}[!htbp]
\centering
\includegraphics[width=0.45\textwidth]{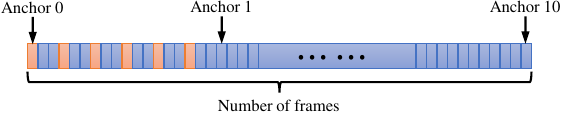}
\caption{Uniformly segmented clips for sampling validation frames. The validation video is evenly divided into 10 clips, then from each clip, 6 frames are uniformly and discretely sampled.}
\label{Fig_val_clips}
\end{figure}

We also reproduce these former works in TT, BT, TR, and HT formats to make a comprehensive comparison. Except TR whose tensorizing and ranks are specific \cite{Zhao_2018_TR,Pan_2019_TRRNN}, all these experiments utilize the shapes 8\(\times\)20\(\times\)20\(\times\)18 and 4\(\times\)4\(\times\)4\(\times\)4 with rank 4 mentioned above. Note that the experiments in \cite{Yin_2020_HTRNN} are made on the shape 8\(\times\)10\(\times\)10\(\times\)9\(\times\)8 that is not considered here for comparing conveniently and convincingly. We follow the related works to execute the 5-fold cross validation \cite{Yang_2017_TTRNN,Ye_2018_BTD,Pan_2019_TRRNN,Yin_2020_HTRNN}, where each validation sample is tested with 10 uniformly segmented clips \cite{Varol_2018_LongTerm3DCNN}, the strategy of which is explained in Figure \ref{Fig_val_clips} iconically.

The synthetic results are listed in Table \ref{Table_ucf11_comparison_1}, in which it can be abstractly learned that there is no tangible evidence to reflect different tensor formats can cause significant differences in accuracy. Besides, TR, HT, and KCP get relatively much fewer parameters, while TT and KCP achieve faster calculation.

\subsection{Experiments on Youtube Celebrities Face}

Youtube Celebrities Face is also a commonly used dataset for tensor-decomposed RNNs \cite{Yang_2017_TTRNN,Yin_2020_HTRNN}, so we try this to test our KCP-LSTM and other kinds of tensor-decomposed LSTMs as well. Most of the training settings including (4,4,2)- and (4,2,2)-KCP are also the same with those of UCF11 but a little differences, which include that, the input shape is tensorized as 4\(\times\)20\(\times\)20\(\times\)36 following \cite{Yang_2017_TTRNN}, and weight decay with 0.01 is applied for all the tensor-decomposed LSTMs.

Table \ref{Table_ycf_comparison_1} exhibits the results of these experiments, and it seems that the variant tensorizing shape of the input, i.e., from 8\(\times\)20\(\times\)20\(\times\)18 to 4\(\times\)20\(\times\)20\(\times\)36, is harmful to the tensor-decomposed LSTMs in BT and KCP formats. However, (4,2,2)-KCP can get significantly lower space and computation complexity than any other one.

\subsection{Experiments on UCF50}

UCF50 is a more complex dataset which could be regarded as the extension of UCF11, so we follow \cite{Wu_2020_Hybrid} to verify whether the similar experimental results can also be obtained on this dataset. We adopt a wider hidden layer, which has 2304 neurons, and the relatively balanced tensorizing shapes, i.e, 15\(\times\)16\(\times\)16\(\times\)15 and 8\(\times\)6\(\times\)6\(\times\)8 \cite{Wu_2020_Hybrid}. Specifically, for the reproduced LSTMs in TT, BT, and HT formats, most ranks are improved from 4 to 6 \cite{Yang_2017_TTRNN,Ye_2018_BTD,Yin_2020_HTRNN}, while for TR, most ranks are enlarged from 5 to 6 \cite{Pan_2019_TRRNN}. With regard to KCP, in total 3 different formats with \(K=6\), i.e., (6,4,4)-, (6,4,2)- and (6,2,2)-KCP, are considered. Besides, since we realize that the wider hidden layer makes faster convergence, the random left-right flipping for training data augmentation is added here, and the upper number of epochs is limited at 200. All the other training settings are the same with those on Youtube Celebrities Face dataset.

According to the results recorded in Table \ref{Table_ucf50_comparison_1}, our KCP-LSTMs can still keep the competitive accuracy compared with other kinds of tensor-decomposed networks. Meanwhile, each specific KCP-LSTM has lower parameters and MFLOPs than any other kind of tensor-decomposed one. 

\section{Discussions}\label{sec:Disc}
In this section, we discuss some characteristics of KCP format, particularly the weights sharing in LSTMs and the potential for parallel computing. The comparison with other correlative state-of-the-art practices will also be given.

\subsection{Weights Sharing}

\begin{table*}
\caption{Comparison of performance among tensor-decomposed LSTMs in original, TT, BT, TR, HT and KCP format with weights sharing based on UCF11 dataset.}
\label{Table_ucf11_comparison_2}
\centering
\setlength{\tabcolsep}{10pt}
\begin{tabular}{l l l l l l l l}
\hline
Performance & Ori. & TT & BT & TR & HT & (4,4,2)-KCP & (4,2,2)-KCP \\
\hline
Top-1 Acc (Ours) & 71.9 & \textbf{88.4} & 85.3 & 86.6 & 86.0 & \textbf{88.1} & 85.0 \\
Avg Acc (Ours) & 67.4 & \textbf{86.2} & 83.1 & 84.4 & 83.7 & \textbf{85.2} & 82.4 \\
Parameters & 58,982,400 & 3,360 & 3,392 & 1,725 & \textbf{1,632} & 1,664 & \textbf{994}\\
Compression Ratio & - & 17,554\(\times\) & 17,389\(\times\) & 34,193\(\times\) & \textbf{36,141\(\times\)} & 35,446\(\times\) & \textbf{59,338\(\times\)} \\
MFLOPs & 335.5 & \textbf{37.1} & 100.6 & 118.9 & 49.1 & 52.6 & \textbf{27.2} \\
\hline
\end{tabular}
\end{table*}

\begin{table*}
\caption{Comparison of performance among tensor-decomposed LSTMs in original, TT, BT, TR, HT and KCP format with weights sharing based on Youtube Celebrities Face dataset.}
\label{Table_ycf_comparison_2}
\centering
\setlength{\tabcolsep}{10pt}
\begin{tabular}{l l l l l l l l}
\hline
Performance & Ori. & TT & BT & TR & HT & (4,4,2)-KCP & (4,2,2)-KCP \\
\hline
Top-1 Acc (Ours) & 75.4 & 86.1 & 84.8 & \textbf{87.2} & 86.4 & \textbf{86.6} & 86.4 \\
Avg Acc (Ours) & 70.8 & 85.0 & 83.4 & \textbf{85.6} & 85.0 & \textbf{85.3} & 84.4\\
Parameters & 58,982,400 & 3,392 & 3,456 & 1,725 & \textbf{1,664} & 1,696 & \textbf{960} \\
Compression Ratio & - & 17,389\(\times\) & 17,067\(\times\) & 34,193\(\times\) & \textbf{35,446\(\times\)} & 34,777\(\times\) & \textbf{61,440\(\times\)} \\
MFLOPs & 355.5 & \textbf{50.5} & 154.5 & 118.9 & 74.6 & 81.6 & \textbf{41.9} \\
\hline
\end{tabular}
\end{table*}

\begin{table*}
\caption{Comparison of performance among tensor-decomposed LSTMs in original, TT, BT, TR, HT and KCP format with weights sharing based on UCF50 dataset.}
\label{Table_ucf50_comparison_2}
\centering
\setlength{\tabcolsep}{10pt}
\begin{tabular}{l l l l l l l l l}
\hline
Performance & Ori. & TT & BT & TR & HT & (6,4,4)-KCP & (6,4,2)-KCP & (6,2,2)-KCP \\
\hline
Top-1 Acc (Ours) & 66.9 & 86.7 & 86.3 & 86.4 & \textbf{87.5} & \textbf{87.0} & 86.6 & 86.7 \\
Avg Acc (Ours) & 65.4 & 85.0 & 84.2 & 84.5 & \textbf{85.8} & 85.1 & 85.0 & \textbf{85.5} \\
Parameters & 530,841,600 & 10,512 & 12,096 & \textbf{2,724} & 5,400 & 3,816 & 3,192 & \textbf{1,908} \\
Compression Ratio & - & 50,499\(\times\) & 43,886\(\times\) & \textbf{194,876\(\times\)} & 98,304\(\times\) & 139,109\(\times\) & 166,304\(\times\) & \textbf{278,219\(\times\)} \\
MFLOPs & 3312.6 & 250.8 & 1,092.3 & 293.7 & 297.8 & 257.8 & \textbf{210.1} & \textbf{169.3} \\
\hline
\end{tabular}
\end{table*}

Yang et al. \cite{Yang_2017_TTRNN} first realized that the weights sharing could be taken into account during compressing TT-LSTM, i.e., concatenate all the input matrices \(\bm{W}_{f}\), \(\bm{W}_{i}\), \(\bm{W}_{z}\), and \(\bm{W}_{o}\) defined in Equation (\ref{Eq_lstm}) to a larger matrix \(\bm{W} \in \mathbb{R} ^{4M \times N}\) to compress as shown in Figure \ref{Fig_weight_share_sketch}(a). Likewise, for KCP-LSTM, by consulting Equation (\ref{Eq_kt_to_kcp}) strictly, only \(\bm{A}_{k}^{(1)} \in \mathbb{R} ^{m_{1} \times C_{k}^{(\bm{\mathcal{A}})}}\) is not shared since \(\bm{A}_{k}^{(1)}\) and \(\bm{B}_{k}^{(1)}\) are two separate matrices so that we can just concatenate \(\bm{A}_{k}^{(1)}\) generally. However, in practice we discover that just making \(\bm{A}_{k}^{(1)}\) independent is not enough, and considering both \(\bm{A}_{k}^{(1)}\) and \(\bm{B}_{k}^{(1)}\), i.e., \(\bm{W}^{(1)}\), to be unshared can make up the accuracy loss significantly as shown in Figure \ref{Fig_weights_share}. The corresponding schematic diagram of our modified weights sharing of KCP-LSTM is illustrated in Figure \ref{Fig_weight_share_sketch}(b), in which we have not drawn all \(\bm{A}_{k}^{(i)} \otimes \bm{B}_{k}^{(i)}\) on account of the limitation of writing space.

\begin{figure}[!htbp]
\centering
\includegraphics[width=0.49\textwidth]{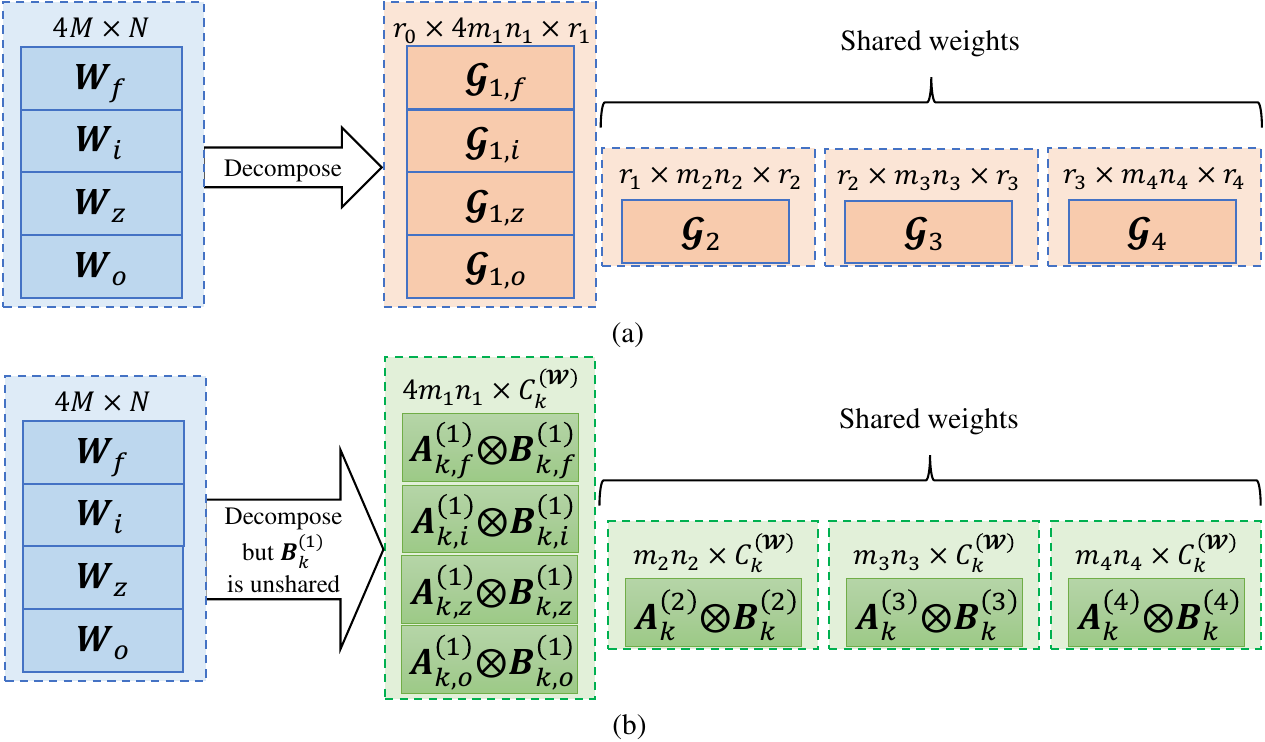}
\caption{Schematic diagrams of weights sharing of (a) TT-LSTM and (b) KCP-LSTM. Note that there are in total \(K\) components for each factor matrix \(\bm{W}^{(i)}\), but only the \(k\)th component is drawn here to delegate all.}
\label{Fig_weight_share_sketch}
\end{figure}

\begin{figure}[!htbp]
\centering
\subfigure[]{\includegraphics[width=0.4\textwidth]{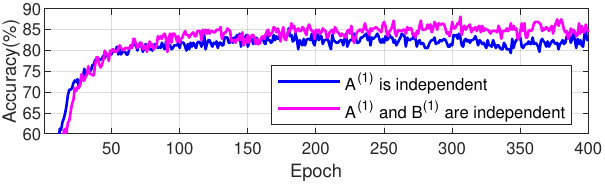}}
\subfigure[]{\includegraphics[width=0.4\textwidth]{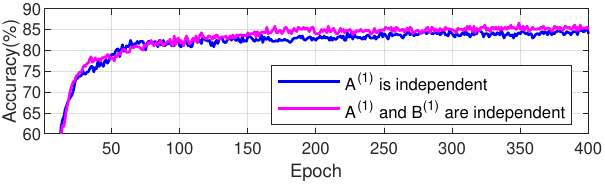}}
\caption{Learning curves of (4,4,2)-KCP-LSTMs under different weights sharing on (a) UCF11 and (b) Youtube Celebrities Face datasets.}
\label{Fig_weights_share}
\end{figure}

The results of tensor-decomposed LSTMs with weights sharing on UCF11, Youtube Celebrities Face, and UCF50 datasets are displayed in Table \ref{Table_ucf11_comparison_2}, \ref{Table_ycf_comparison_2}, and \ref{Table_ucf50_comparison_2}, respectively. The compression ratio of each tensor-decomposed LSTM is evidently improved compared with Table \ref{Table_ucf11_comparison_1}, \ref{Table_ycf_comparison_1} and \ref{Table_ucf50_comparison_1}. It is amazing that the compression ratio of (6,2,2)-KCP-LSTM, which could still maintain the accuracy, is about \(2.8\times 10^5\) times, and is extremely higher than any other kind of tensor decomposition.

\subsection{Compression Effect}

\begin{figure*}
\centering
\subfigure[]{\includegraphics[width=0.3\textwidth]{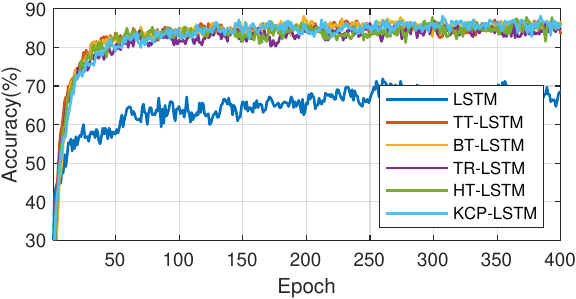}}
\subfigure[]{\includegraphics[width=0.3\textwidth]{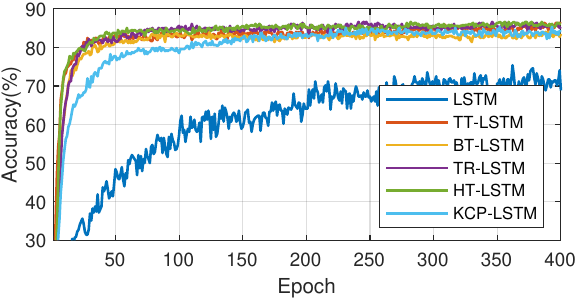}}
\subfigure[]{\includegraphics[width=0.3\textwidth]{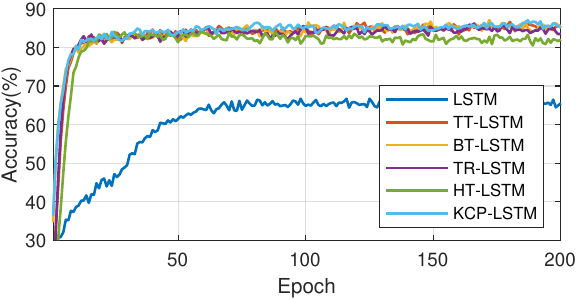}}
\subfigure[]{\includegraphics[width=0.3\textwidth]{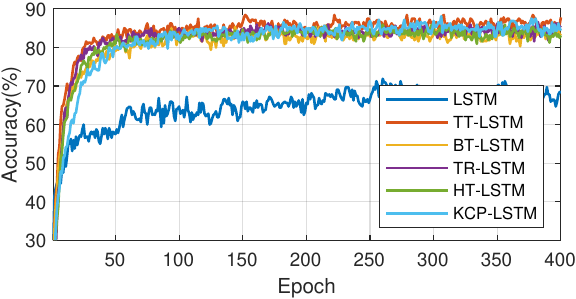}}
\subfigure[]{\includegraphics[width=0.3\textwidth]{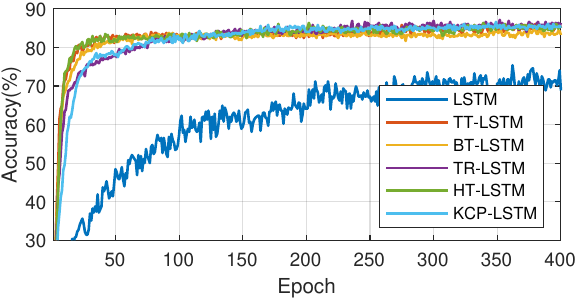}}
\subfigure[]{\includegraphics[width=0.3\textwidth]{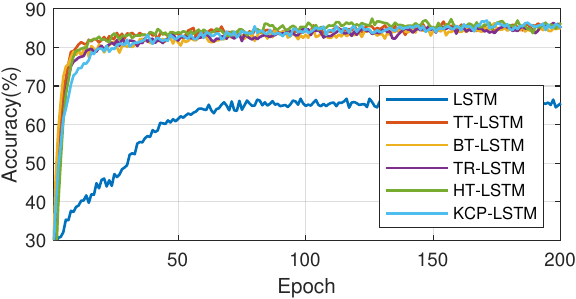}}
\caption{Learning curves of tensor-decomposed LSTMs on (a) UCF11, (b) Youtube Celebrities Face, and (c) UCF50 without weights sharing; on (d) UCF11, (e) Youtube Celebrities Face, and (f) UCF50 with weights sharing. Here we only show the (4,4,2)- and (6,4,4)-KCP-LSTM.}
\label{Fig_curves}
\end{figure*}

All of our experiments represent the compression effect of multiple tensor-decomposed LSTMs comprehensively. We find that it is arbitrary to claim which kind of tensor decomposition method is the best on preserving information by scanning Table \ref{Table_ucf11_comparison_1}, \ref{Table_ycf_comparison_1}, \ref{Table_ucf50_comparison_1}, \ref{Table_ucf11_comparison_2}, \ref{Table_ycf_comparison_2} and \ref{Table_ucf50_comparison_2}. Specifically, as we mentioned in Section \ref{sec:Intro}, reviewing related works \cite{Yang_2017_TTRNN,Ye_2018_BTD,Pan_2019_TRRNN,Yin_2020_HTRNN} might draw a conclusion that the level of superiority is TT \(<\) BT \(<\) TR \(<\) HT, which is perhaps misleading according to our own practices, where TT-LSTM with weights sharing even gets the best accuracy on UCF11. The experimental results also reflect that the lack of uniqueness of KT \cite{Phan_2012_KTD1} has no significant influence on the compression effect of KCP-RNNs. Synthetically, KCP-LSTM has its particular advantages in both space and computation complexity at least for the standard applications without elaborate weights sharing.

Learning curves illustrated in Figure \ref{Fig_curves} further verify our opinion intuitively since the curves of different tensor-decomposed formats are all very close. However, it seems that KCP-LSTM converges a bit slower sometimes, but its final level of accuracy has no significant difference compared with other tensor-decomposed LSTMs.

\begin{figure}
\centering
\subfigure[]{\includegraphics[width=0.22\textwidth]{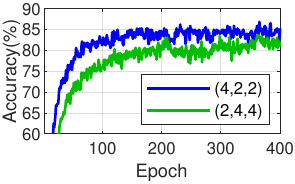}}
\subfigure[]{\includegraphics[width=0.22\textwidth]{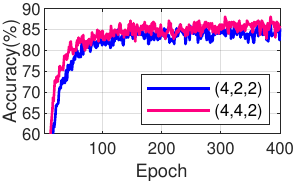}}
\subfigure[]{\includegraphics[width=0.22\textwidth]{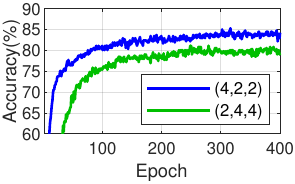}}
\subfigure[]{\includegraphics[width=0.22\textwidth]{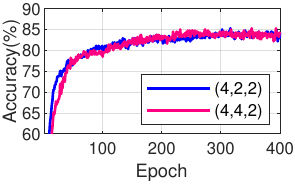}}
\subfigure[]{\includegraphics[width=0.22\textwidth]{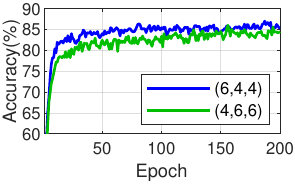}}
\subfigure[]{\includegraphics[width=0.22\textwidth]{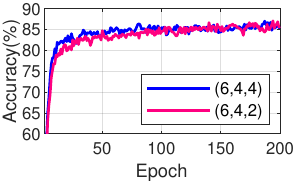}}
\caption{Learning curves of KCP-LSTMs with different KT rank but the same amounts of parameters, or the variant parameters under the same KT rank, on (a)(b) UCF11, (c)(d) Youtube Celebrities Face, and (e)(f) UCF50 datasets.}
\label{Fig_diff_shape}
\end{figure}

\subsection{KT Rank}

Combining with the inspiration from Lemma \ref{lem_rankk}, we discover that KT rank \(K\) is more important than CP ranks \(C^{\bm{\mathcal{A}}}\) and \(C^{\bm{\mathcal{B}}}\). For example, it is easy to work out that the amount of parameters of (4,2,2)-KCP and (2,4,4)-KCP are the same, however, the former runs much better than the latter one as demonstrated in Figure \ref{Fig_diff_shape}(a) and (c). For relatively complex UCF50 dataset, the same phenomenon can be seen in Figure \ref{Fig_diff_shape}(e) by comparing (6,4,4)-KCP and (4,6,6)-KCP. This means that \(K\) makes more contribution to the expressive ability of KCP-LSTMs, and in comparison the number of parameters is the secondary, since the learning curves grouped in Figure \ref{Fig_diff_shape}(b), (d), and (f) are much closer separately.

More importantly, with the same number of parameters, higher \(K\) brings more efficient computation complexity. According to Table \ref{Table_ucf11_comparison_1} and \ref{Table_ycf_comparison_1}, the MFLOPs of (4,2,2)-KCP-LSTMs on UCF11 and Youtube Face Celebrities are 37.9 and 63.1 respectively, while those of (2,2,4)-KCP-LSTMs could be separately calculated as 60.9 and 109.1. Likewise, (6,4,4)-KCP-LSTM on UCF50 dataset has 336.8 MFLOPs recorded in Table \ref{Table_ucf50_comparison_1}, whereas the MFLOPs of (4,6,6)-KCP-LSTM are about 417.5.

\subsection{Potential Parallel Computing}

Since both BT and KCP have the form of sum, these two tensor decomposition methods become the most probable formats that could be dealt with concurrently. Lemma \ref{lem_KTRank} hints that KCP has better potential for parallel computing than BT. Space complexity listed in Table \ref{Table_complex_comparison} could give a more obvious explanation. Concretely, even if we do not consider the kernel tensor in BT, i.e., \(r^{d}\) is removed, for a fair comparison, there should be
\begin{equation}
\mathcal{O}\left(dmnrP\right) = \mathcal{O}\left(d(m+n)CK\right)
\end{equation}
where \(C={\rm max}\{C^{(\bm{\mathcal{A}})},C^{(\bm{\mathcal{B}})}\}\). Further, in practice, Tucker rank \(r\) is set as 4 \cite{Ye_2018_BTD} while CP rank \(C\) is also 4 in most of our experiments. Therefore, we have
\begin{equation}
K = \frac{mn}{m+n}P
\end{equation}
in which \((mn)/(m+n) \geq 1\) resulting \(K \geq P\), is usually established since \({\rm min}\{m,n\} \geq 2\) in tensorizing. For example, BT-LSTM in \cite{Ye_2018_BTD} has \(P=2\) on UCF11 dataset, while our KCP-LSTM has \(K=4\), and even if so KCP-LSTM still gets lower parameters according to Table \ref{Table_ucf11_comparison_1} and \ref{Table_ucf11_comparison_2}.

What needs to be emphasized is that, standard KCP format defined in Equation (\ref{Eq_kcp}) should not be considered for parallel computing. Contrarily, directly using Equation (\ref{Eq_general_ktd}) by substituting \(\bm{\mathcal{A}}_{k}\) and \(\bm{\mathcal{B}}_{k}\) with Equation (\ref{Eq_kt_to_kcp}) is the only choice. Nevertheless, the strategy of Algorithm \ref{Alg_KCP} could still be transplanted into the procedure of the multiplication between \(\bm{\mathcal{X}}\) and each \(\bm{W}_{k}^{(i)} = \bm{A}_{k}^{(i)} \otimes \bm{B}_{k}^{(i)}\) to gradually remove the CP ranks \(C_{k}^{(\bm{\mathcal{W}})}=C_{k}^{(\bm{\mathcal{A}})}C_{k}^{(\bm{\mathcal{B}})}\). If so, the ideal computation complexity will be further reduced from \(\mathcal{O}\left(d{\rm max}\{m,n\}^{d+1}C^{(\bm{\mathcal{A}})}C^{(\bm{\mathcal{B}})}K\right)\) to \(\mathcal{O}\left(d{\rm max}\{m,n\}^{d+1}C^{(\bm{\mathcal{A}})}C^{(\bm{\mathcal{B}})}\right)\) according to Equation (\ref{Eq_compcomplex}). The intuitionistic description of the parallel computing in KCP format is exhibited in Figure \ref{Fig_multiplication_parallel}, which is very similar to Figure \ref{Fig_multiplication}(b), and both of them should carry out \(\sum_{k} \bm{\mathcal{X}}_{k}\) when the even number of \(i\) is passed because of the limitation of Theorem 2. Spontaneously, Algorithm \ref{Alg_KCP_2} is also a natural choice for parallel computing.

\begin{figure}[!htbp]
\centering
\includegraphics[width=0.42\textwidth]{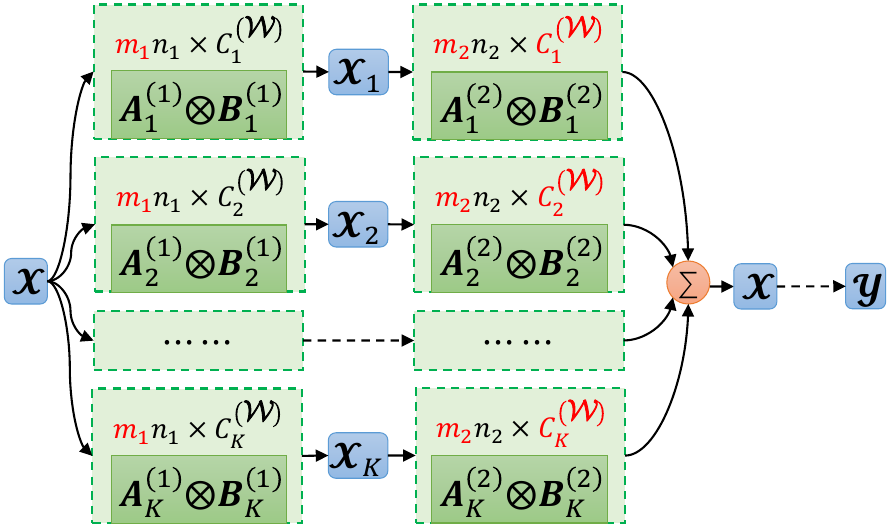}
\caption{Parallel multiplication between input and weight in KCP format on the basis of Algorithm \ref{Alg_KCP}.}
\label{Fig_multiplication_parallel}
\end{figure}

Here we use KCP-weights in our experiments on UCF11 and UCF50 datasets, i.e., \((8 \cdot 4) \times (20 \cdot 4) \times (20 \cdot 4) \times (18 \cdot 4)\) with \(K=4\) and \((15 \cdot 8) \times (16 \cdot 6) \times (16 \cdot 6) \times (15 \cdot 8)\) with \(K=6\), to design a simple simulation to test the power of parallel computing of KCP. We let the CP rank to be \(C=C^{(\bm{\mathcal{A}})}=C^{(\bm{\mathcal{B}})}\) and set it from 2 to 100 to test their inference time of multiplications under serial and parallel strategies respectively, of which the former is referred to Figure \ref{Fig_multiplication}(a) and the latter is consulted with Figure \ref{Fig_multiplication_parallel} by utilizing multi-process. As demonstrated in Figure \ref{Fig_parallel_test}, it is clear that the parallel multiplication has obvious superiority over the normal serial algorithm, and this advantage is enlarging as the scale of rank grows.

\begin{figure}[!htbp]
\centering
\includegraphics[width=0.35\textwidth]{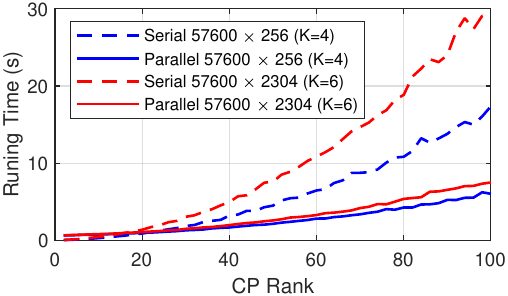}
\caption{Serial vs parallel multiplication between input and weight in KCP format on the basis of Algorithm \ref{Alg_KCP}.}
\label{Fig_parallel_test}
\end{figure}

However, when the CP rank is relatively low, the superiority of parallel computing is missing. The reason is that the cost of building sub-processes might be heavier than the saved MFLOPs by parallel computing. To implement the full power of parallel computing of KCP, we deem that combining with the specific hardware architecture to minimize the cost of managing multi-process or multi-thread is necessary in future works.

\subsection{Comparison with Typical Works}

Finally, here we give a brief comparison between our results and other typical works in Table \ref{Table_comparison_sota} to highlight the superiority of KCP-RNNs. Since most of our training settings follow the previous typical works \cite{Yang_2017_TTRNN,Ye_2018_BTD,Pan_2019_TRRNN,Yin_2020_HTRNN,Wu_2020_Hybrid} that all utilized weights sharing, our results listed here are just copied from Table \ref{Table_ucf11_comparison_2}, \ref{Table_ycf_comparison_2} and \ref{Table_ucf50_comparison_2}. For fairness, the amount of parameters of the entire tensor-decomposed LSTMs is shown rather than only \(\bm{W}_{\theta}\). Obviously, in most cases, performances of our KCP-LSTMs could be matched with those typical works, and further the RNNs in KCP format can give consideration to both efficient space and computation complexity. Note that we mainly focus on the practices using end-to-end training in view of our own experiments, but unfortunately, most of them on UCF50 need pre-training.

\begin{table}[!htbp]
\caption{Comparing our experimental results with partial typical works on UCF11, Youtube Celebrities Face, and UCF50 datasets.}
\label{Table_comparison_sota}
\centering
\setlength{\tabcolsep}{5pt}
\begin{tabular}{l l l l}
\hline
\multicolumn{4}{c}{UCF11} \\
\hline
Method & Top-1 Acc & Parameters & MFLOPs \\
\hline
Visual Attention (2016) \cite{Sharma_2016_UCF11} & 85.0 & 108M & - \\
TT-LSTM (2017) \cite{Yang_2017_TTRNN} & 79.6 & 0.268M & 37.1 \\
TT-GRU (2017) \cite{Yang_2017_TTRNN} & 81.3 & 0.203M & 31.2 \\
BT-LSTM (2018) \cite{Ye_2018_BTD} & 85.3 & 0.268M & 100.6 \\
TR-LSTM (2019) \cite{Pan_2019_TRRNN} & 86.9 & 0.267M & 118.9 \\
HT-LSTM (2020) \cite{Wu_2020_Hybrid} & 79.7 & 7.4M & - \\
5th-order HT-LSTM (2020) \cite{Yin_2020_HTRNN} & 87.2 & 0.266M & - \\
Ours (4,4,2)-KCP-LSTM & \textbf{88.1} & \textbf{0.267M} & \textbf{52.6} \\
Ours (4,2,2)-KCP-LSTM & \textbf{85.0} & \textbf{0.266M} & \textbf{27.2} \\
\hline
\multicolumn{4}{c}{Youtube Celebrities Face} \\
\hline
Method & Top-1 Acc & MPs & MFLOPs \\
\hline
Multi-manifold (2015) \cite{Lu_2015_MultiManifold} & 78.5 & 0.1M & - \\
RRNN (2016) \cite{Li_2016_RRNN} & 86.6 & 38M & 381 \\
TT-LSTM (2017) \cite{Yang_2017_TTRNN} & 75.5 & 0.278M & 50.5 \\
TT-GRU (2017) \cite{Yang_2017_TTRNN} & 80.0 & 0.212M & 44.5 \\
5th-order HT-LSTM (2020) \cite{Yin_2020_HTRNN} & 88.1 & 0.275M & - \\
Ours (4,4,2)-KCP-LSTM & \textbf{86.6} & \textbf{0.267M} & \textbf{81.6} \\
Ours (4,2,2)-KCP-LSTM & \textbf{86.4} & \textbf{0.266M} & \textbf{41.9} \\
\hline
\multicolumn{4}{c}{UCF50} \\
\hline
Method & Top-1 Acc & MPs & MFLOPs \\
\hline
VGG-F (2017) \cite{Banerjee_2017_VGGFPool} & 78.0 & 74M & 724 \\
SDMA-TDD (2019) \cite{Xu_2019_SDMA_TDD} & 89.0 & 59M & - \\
Fusion Feature (2019) \cite{Wang_2019_Fusion} & 91.1 & 111M & - \\
HT-LSTM (2020) \cite{Wu_2020_Hybrid} & 76.6 & 7.4M & - \\
Ours (6,4,4)-KCP-LSTM & \textbf{87.0} & \textbf{5.4M} & \textbf{257.8} \\
Ours (6,4,2)-KCP-LSTM & \textbf{86.6} & \textbf{5.4M} & \textbf{210.1} \\
Ours (6,2,2)-KCP-LSTM & \textbf{86.7} & \textbf{5.4M} & \textbf{169.3} \\
\hline
\end{tabular}
\end{table}

\section{Conclusions}\label{sec:Conc}

This paper introduces a novel tensor-decomposed KCP-RNN, which is based on the knowledge of KT and KCP tensor decomposition. We also prove some important lemmas and theorems to consolidate corresponding theoretical foundations of KT and KCP. Our experiments on UCF11, Youtube Celebrities Face, and UCF50 datasets reveal that different kinds of tensor-decomposed RNNs do not have significant differences on accuracy no matter with or without weights sharing. This conclusion might correct the possible misleading opinion, i.e., each latter one is better than the former one among TT, BT, TR and HT, caused by pre-existing related works \cite{Ye_2018_BTD,Pan_2019_TRRNN,Yin_2020_HTRNN}. Nevertheless, our KCP-RNN has its own superiority that the multiplication in KCP format is good for not only lightweight space complexity, but also efficient computation complexity. Moreover, we discuss that KCP-RNN has the best potential for parallel computing, which is a promising point in future works.

\section*{Appendix}

\subsection{Proof of the Theorem \ref{Thm_KCP}}\label{appdx:a}
\begin{proof}
Above all, it is necessary to simplify the object of proof. Therefore, suppose this theorem is correct, according to the definition of CP, the contraction format defined in Equation (\ref{Eq_kcp}) can also be represented as the sum of rank-1 tensors like
\begin{equation}\label{Eq_thm_2_1}
\bm{\mathcal{W}} = \sum_{c=1}^{C^{(\bm{\mathcal{W}}})} {\bm{w}_{c}^{(1)} \circ \bm{w}_{c}^{(2)} \circ \cdots \circ \bm{w}_{c}^{(d)}}
\end{equation}
where \(\bm{w}_{c}^{(i)} \in \mathbb{R} ^{m_{i}n_{i}}\) comes from the \(c\)th column vector of \(\bm{W}^{(i)}\). If we stack all these \(\bm{W}^{(i)}\) as a matrix \(\overline{\bm{W}}\) like
\begin{equation}\label{Eq_thm_2_2}
\overline{\bm{W}} = \begin{bmatrix} \bm{W}^{(1)} \\ \bm{W}^{(2)} \\ \vdots \\ \bm{W}^{(d)} \end{bmatrix},
\end{equation}
then the \(c\)th rank-1 tensor \(\bm{w}_{c}^{(1)} \circ \bm{w}_{c}^{(2)} \circ \cdots \circ \bm{w}_{c}^{(d)}\) from Equation (\ref{Eq_thm_2_1}) is composed by the entries of the \(c\)th column of \(\overline{\bm{W}}\). Likewise, the first \(C_{1}^{(\bm{\mathcal{A}})}C_{1}^{(\bm{\mathcal{B}})}\) rank-1 tensors on the right side of Equation (\ref{Eq_thm_2_1}) are made up of entries of the first \(C_{1}^{(\bm{\mathcal{A}})}C_{1}^{(\bm{\mathcal{B}})}\) column vectors of \(\overline{\bm{W}}\), and these column vectors can be manifested as
\begin{equation}\label{Eq_thm_2_3}
\overline{\bm{W}}[:,c_{1}] = \begin{bmatrix} \bm{A}_{1}^{(1)} \otimes \bm{B}_{1}^{(1)} \\ \bm{A}_{1}^{(2)} \otimes \bm{B}_{1}^{(2)} \\ \vdots \\ \bm{A}_{1}^{(d)} \otimes \bm{B}_{1}^{(d)} \end{bmatrix}
\end{equation}
where \(c_{1} \in \{1,2,\cdots,C_{1}^{(\bm{\mathcal{A}})}C_{1}^{(\bm{\mathcal{B}})}\}\). As both Equation (\ref{Eq_thm_2_1}) and (\ref{Eq_general_ktd}) are the format of summation for \(\bm{\mathcal{W}}\), there should be
\begin{equation}\label{Eq_thm_2_4}
\sum_{k=1}^{K}{\bm{\mathcal{A}}_{k} \otimes \bm{\mathcal{B}}_{k}} = \sum_{c=1}^{C^{(\bm{\mathcal{W}}})} {\bm{w}_{c}^{(1)} \circ \bm{w}_{c}^{(2)} \circ \cdots \circ \bm{w}_{c}^{(d)}}.
\end{equation}
Then according to (\ref{Eq_thm_2_3}) and (\ref{Eq_thm_2_4}), since \(\overline{\bm{W}}[:,c_{1}]\) only refers to \(k=1\), there should be
\begin{equation}\label{Eq_thm_2_5}
\bm{\mathcal{A}}_{1} \otimes \bm{\mathcal{B}}_{1} = \sum_{c=1}^{C_{1}^{(\bm{\mathcal{A}})}C_{1}^{(\bm{\mathcal{B}})}} {\bm{w}_{c}^{(1)} \circ \bm{w}_{c}^{(2)} \circ \cdots \circ \bm{w}_{c}^{(d)}}.
\end{equation}
Without loss of generality, \(k=1\) is enough since the similar result to Equation (\ref{Eq_thm_2_5}) could be obtained when \(k \neq 1\) on the basis of linear superposition.

With this simplification of the theorem, we can assume \(K=1\) and just prove that
\begin{equation}\label{Eq_thm_2_6}
\begin{aligned}
&\left(\bm{\mathcal{I}}_{1}^{(\bm{\mathcal{A}})} \bullet \bm{A}_{1}^{(1)} \bullet \cdots \bullet \bm{A}_{1}^{(d)}\right) \otimes \left(\bm{\mathcal{I}}_{1}^{(\bm{\mathcal{B}})} \bullet \bm{B}_{1}^{(1)} \bullet \cdots \bullet \bm{B}_{1}^{(d)}\right) \\
= & \sum_{c=1}^{C_{1}^{(\bm{\mathcal{A}})}C_{1}^{(\bm{\mathcal{B}})}} {\bm{w}_{c}^{(1)} \circ \bm{w}_{c}^{(2)} \circ \cdots \circ \bm{w}_{c}^{(d)}}.
\end{aligned}
\end{equation}

Firstly, consider the left side of Equation (\ref{Eq_thm_2_6}), CP-tensors in contraction format can be rewritten in the sum of rank-1 tensors like
\begin{equation}\label{Eq_thm_2_7}
\left(\sum_{\gamma=1}^{C_{1}^{(\bm{\mathcal{A}})}} {\bm{a}_{\gamma}^{(1)} \circ \bm{a}_{\gamma}^{(2)} \circ \cdots \circ \bm{a}_{\gamma}^{(d)}}\right) \otimes \left(\sum_{\tau=1}^{C_{1}^{(\bm{\mathcal{B}})}} {\bm{b}_{\tau}^{(1)} \circ \bm{b}_{\tau}^{(2)} \circ \cdots \circ \bm{b}_{\tau}^{(d)}}\right)
\end{equation}
where \(\bm{a}_{\gamma}^{(i)} \in \mathbb{R} ^{m_{i}}\) and \(\bm{b}_{\tau}^{(i)} \in \mathbb{R} ^{n_{i}}\). Assume the corresponding indices of \(m_{i}\), \(n_{i}\) and \(m_{i}n_{i}\) are, respectively, \(\alpha_{i}\), \(\beta_{i}\) and \(\omega_{i}\), combining (\ref{Eq_general_ktd}) and (\ref{Eq_thm_2_7}), we have
\begin{equation}\label{Eq_thm_2_8}
\begin{aligned}
&\mathcal{W}(\omega_{1},\omega_{2},\cdots,\omega_{d})=\mathcal{W}(\overline{\alpha_{1}\beta_{1}},\overline{\alpha_{2}\beta_{2}},\cdots,\overline{\alpha_{d}\beta_{d}}) \\
= &\mathcal{A}(\alpha_{1},\alpha_{2},\cdots,\alpha_{d})\mathcal{B}(\beta_{1},\beta_{2},\cdots,\beta_{d}) \\
= &\left(\sum_{\gamma=1}^{C_{1}^{(\bm{\mathcal{A}})}} {a_{\gamma}^{(1)}(\alpha_{1}) \cdots a_{\gamma}^{(d)}(\alpha_{d})}\right) \left(\sum_{\tau=1}^{C_{1}^{(\bm{\mathcal{B}})}} {b_{\tau}^{(1)}(\beta_{1}) \cdots b_{\tau}^{(d)}(\beta_{d})}\right) \\
= &\left(\sum_{\gamma=1}^{C_{1}^{(\bm{\mathcal{A}})}} {\left(\prod_{i=1}^{d} {a_{\gamma}^{(i)}(\alpha_{i})}\right)}\right) \left(\sum_{\tau=1}^{C_{1}^{(\bm{\mathcal{B}})}} {\left(\prod_{i=1}^{d} {b_{\tau}^{(i)}(\beta_{i})}\right)}\right) \\
= & \sum_{\gamma=1}^{C_{1}^{(\bm{\mathcal{A}})}} \sum_{\tau=1}^{C_{1}^{(\bm{\mathcal{B}})}} {\left(\prod_{i=1}^{d} {a_{\gamma}^{(i)}(\alpha_{i})b_{\tau}^{(i)}(\beta_{i})}\right)}.
\end{aligned}
\end{equation}

Secondly, observe the right side of Equation (\ref{Eq_thm_2_6}), from \(\bm{\mathcal{W}}\) to its CP format, we have
\begin{equation}\label{Eq_thm_2_9}
\begin{aligned}
&\mathcal{W}(\omega_{1},\omega_{2},\cdots,\omega_{d}) \\ 
= &\sum_{c=1}^{C_{1}^{(\bm{\mathcal{A}})}C_{1}^{(\bm{\mathcal{B}})}} {w_{c}^{(1)}(\omega_{1}) w_{c}^{(2)}(\omega_{2}) \cdots w_{c}^{(d)}(\omega_{d})} \\ 
= &\sum_{c=1}^{C_{1}^{(\bm{\mathcal{A}})}C_{1}^{(\bm{\mathcal{B}})}} {\left(\prod_{i=i}^{d} {w_{c}^{(i)}(\omega_{i})}\right)}.
\end{aligned}
\end{equation}
Define the \(i\)th sub-matrix in Equation (\ref{Eq_thm_2_3}) as \(\bm{A}_{1}^{(i)} \otimes \bm{B}_{1}^{(i)} = \bm{D}^{(i)} \in \mathbb{R} ^{m_{i}n_{i} \times C_{1}^{(\bm{\mathcal{A}})}C_{1}^{(\bm{\mathcal{B}})}}\), then any single entry \(w_{c}^{(i)}(\omega_{i})\) in Equation (\ref{Eq_thm_2_9}) is actually the entry \(D^{(i)}(\omega_{i},c)\) because sub-matrices in Equation (\ref{Eq_thm_2_3}) are the factor matrices of CP when \(K=1\), i.e., \(\bm{D}^{(i)} = \bm{W}^{(i)}\). So we have
\begin{equation}\label{Eq_thm_2_10}
\begin{aligned}
&w_{c}^{(i)}(\omega_{i}) = D^{(i)}(\omega_{i},c) = D^{(i)}(\overline{\alpha_{i}\beta{i}},\overline{\gamma\tau}) \\
= &A_{1}^{(i)}(\alpha_{i},\gamma)B_{1}^{(i)}(\beta_{i},\tau)
\end{aligned}
\end{equation}
where \(\gamma\) and \(\tau\) are indices of \(C_{1}^{(\bm{\mathcal{A}})}\) and \(C_{1}^{(\bm{\mathcal{B}})}\) respectively.
Similarly, if we rewrite Equation (\ref{Eq_kt_to_kcp}) to the sum of rank-1 tensors, just like \(w_{c}^{(i)}(\omega_{i}) = D^{(i)}(\omega_{i},c)\), we have
\begin{equation}\label{Eq_thm_2_11}
A_{1}^{(i)}(\alpha_{i},\gamma)B_{1}^{(i)}(\beta_{i},\tau) = a_{\gamma}^{(i)}(\alpha_{i})b_{\tau}^{(i)}(\beta_{i}).
\end{equation}
In accordance with simultaneous Equations (\ref{Eq_thm_2_9}), (\ref{Eq_thm_2_10}) and (\ref{Eq_thm_2_11}), we have
\begin{equation}\label{Eq_thm_2_12}
\begin{aligned}
&\mathcal{W}(\omega_{1},\omega_{2},\cdots,\omega_{d}) = \sum_{c=1}^{C_{1}^{(\bm{\mathcal{A}})}C_{1}^{(\bm{\mathcal{B}})}} {\left(\prod_{i=i}^{d} {w_{c}^{(i)}(\omega_{i})}\right)} \\
= &\sum_{\gamma=1}^{C_{1}^{(\bm{\mathcal{A}})}} \sum_{\tau=1}^{C_{1}^{(\bm{\mathcal{B}})}} {\left(\prod_{i=i}^{d} {a_{\gamma}^{(i)}(\alpha_{i})b_{\tau}^{(i)}(\beta_{i})}\right)}.
\end{aligned}
\end{equation}

Finally, according to Equation (\ref{Eq_thm_2_8}) and (\ref{Eq_thm_2_12}), Equation (\ref{Eq_thm_2_6}) is established, then this theorem is proved.
\end{proof}

\subsection{Proof of the Theorem \ref{Thm_Multiply}}\label{appdx:b}
\begin{proof}
In the angle of entries, if we assume the indices of \(m_{i}\), \(n_{i}\), and \(C^{(\bm{\mathcal{W}})}\) are \(\alpha_{i}\), \(\beta_{i}\), and \(c\) respectively, according to Algorithm \ref{Alg_KCP}, we could have
\begin{equation}\label{Eq_thm_3_1}
\begin{aligned}
& \mathcal{Y}(\beta_{1}, \beta_{2}) \\ 
= & \sum_{c=1}^{C^{(\bm{\mathcal{W}})}} \sum_{\alpha_{1}=1}^{m_{1}} \sum_{\alpha_{2}=1}^{m_{2}} \mathcal{X}(\alpha_{1}, \alpha_{2}) W^{(1)}(\overline{\alpha_{1}\beta_{1}},c) W^{(2)}(\overline{\alpha_{2}\beta_{2}},c).
\end{aligned}
\end{equation}
Meanwhile, if we assume the indices of \(C_{k}^{(\bm{\mathcal{A}})}\) and \(C_{k}^{(\bm{\mathcal{B}})}\) are separately \(\gamma_{k}\) and \(\tau_{k}\), Equation (\ref{Eq_RelaxFast}) could be represented as
\begin{equation}\label{Eq_thm_3_2}
\begin{aligned}
& \mathcal{Y}(\beta_{1}, \beta_{2}) \\ 
= & \sum_{k=1}^{K} \sum_{\alpha_{1}=1}^{m_{1}} \sum_{\alpha_{2}=1}^{m_{2}} \sum_{\gamma_{k}=1}^{C_{k}^{(\bm{\mathcal{A}})}} \sum_{\tau_{k}=1}^{C_{k}^{(\bm{\mathcal{B}})}} \mathcal{X}(\alpha_{1}, \alpha_{2}) A_{k}^{(1)}(\alpha_{1}, \gamma_{k}) \\ & B_{k}^{(1)}(\beta_{1}, \tau_{k}) A_{k}^{(2)}(\alpha_{2}, \gamma_{k}) B_{k}^{(2)}(\beta_{2}, \tau_{k}).
\end{aligned}
\end{equation}
Naturally, we can just prove that the right sides of Equation (\ref{Eq_thm_3_1}) and (\ref{Eq_thm_3_2}) are equal.

According to Theorem \ref{Thm_KCP}, the column structure of \(\bm{W}^{(i)}\) defined in Equation (\ref{Eq_W_i}) is
\begin{equation}\label{Eq_thm_3_3}
\underbrace{[ \underbrace{\bm{A}_{1}^{(i)} \otimes \bm{B}_{1}^{(i)}}_{c_1 \in [1, C_{1}^{(\bm{\mathcal{W}})}]} \quad \underbrace{\bm{A}_{2}^{(i)} \otimes \bm{B}_{2}^{(i)}}_{c_2 \in [1, C_{2}^{(\bm{\mathcal{W}})}]} \quad \cdots \quad \underbrace{\bm{A}_{K}^{(i)} \otimes \bm{B}_{K}^{(i)}}_{c_K \in [1, C_{K}^{(\bm{\mathcal{W}})}]} ]}_{c \in [1, C^{(\bm{\mathcal{W}})}]}
\end{equation}
where \(c_{k}\) is the index of the local columns \(C_{k}^{(\bm{\mathcal{A}})}C_{k}^{(\bm{\mathcal{B}})}\) which is represented as \(C_{k}^{(\bm{\mathcal{W}})}\) here, and we have \(c_{k} = \overline{\gamma_{k}\tau_{k}}\).

If we denote \(\bm{D}_{k}^{(i)}=\bm{A}_{k}^{(i)} \otimes \bm{B}_{k}^{(i)}\) as shown in Algorithm \ref{Alg_KCP}, based on (\ref{Eq_thm_3_3}), the contraction of mode \(C^{(\bm{\mathcal{W}})}\) could be regarded as the contractions of modes \(C_{1}^{(\bm{\mathcal{W}})}\), \(C_{2}^{(\bm{\mathcal{W}})}\), \(\cdots\), \(C_{K}^{(\bm{\mathcal{W}})}\). Hence, entries in the right side of Equation (\ref{Eq_thm_3_1}) could be further transformed as
\begin{align}\label{Eq_thm_3_4}
& \mathcal{Y}(\beta_{1}, \beta_{2}) \notag \\ 
= & \sum_{\alpha_{1}=1}^{m_{1}} \sum_{\alpha_{2}=1}^{m_{2}} \mathcal{X}(\alpha_{1}, \alpha_{2}) \left(\sum_{c=1}^{C^{(\bm{\mathcal{W}})}} W^{(1)}(\overline{\alpha_{1}\beta_{1}},c) W^{(2)}(\overline{\alpha_{2}\beta_{2}},c)\right) \notag \\
= & \sum_{\alpha_{1}=1}^{m_{1}} \sum_{\alpha_{2}=1}^{m_{2}} \mathcal{X}(\alpha_{1}, \alpha_{2}) \left(\sum_{c_{1}=1}^{C_{1}^{(\bm{\mathcal{W}})}} D_{1}^{(1)}(\overline{\alpha_{1}\beta_{1}},c_{1}) D_{1}^{(2)}(\overline{\alpha_{2}\beta_{2}},c_{1}) \right. \notag \\
& \left. + \sum_{c_{2}=1}^{C_{2}^{(\bm{\mathcal{W}})}} D_{2}^{(1)}(\overline{\alpha_{1}\beta_{1}},c_{2}) D_{2}^{(2)}(\overline{\alpha_{2}\beta_{2}},c_{2}) + \cdots + \right. \notag \\
& \left. \sum_{c_{K}=1}^{C_{K}^{(\bm{\mathcal{W}})}} D_{K}^{(1)}(\overline{\alpha_{1}\beta_{1}},c_{K}) D_{K}^{(2)}(\overline{\alpha_{2}\beta_{2}},c_{K})\right) \notag \\
= & \sum_{\alpha_{1}=1}^{m_{1}} \sum_{\alpha_{2}=1}^{m_{2}} \mathcal{X}(\alpha_{1}, \alpha_{2}) \sum_{k=1}^{K} \sum_{c_{k}=1}^{C_{k}^{(\bm{\mathcal{W}})}} D_{k}^{(1)}(\overline{\alpha_{1}\beta_{1}},c_{k}) D_{k}^{(2)}(\overline{\alpha_{2}\beta_{2}},c_{k}) \notag \\
= & \sum_{\alpha_{1}=1}^{m_{1}} \sum_{\alpha_{2}=1}^{m_{2}} \mathcal{X}(\alpha_{1}, \alpha_{2}) \sum_{k=1}^{K} \sum_{\gamma_{k}=1}^{C_{k}^{(\bm{\mathcal{A}})}} \sum_{\tau_{k}=1}^{C_{k}^{(\bm{\mathcal{B}})}} D_{k}^{(1)}(\overline{\alpha_{1}\beta_{1}},\overline{\gamma_{k}\tau_{k}}) \notag \\ 
& D_{k}^{(2)}(\overline{\alpha_{2}\beta_{2}},\overline{\gamma_{k}\tau_{k}}) \notag \\
= & \sum_{\alpha_{1}=1}^{m_{1}} \sum_{\alpha_{2}=1}^{m_{2}} \mathcal{X}(\alpha_{1}, \alpha_{2}) \sum_{k=1}^{K} \sum_{\gamma_{k}=1}^{C_{k}^{(\bm{\mathcal{A}})}} \sum_{\tau_{k}=1}^{C_{k}^{(\bm{\mathcal{B}})}} A_{k}^{(1)}(\alpha_{1}, \gamma_{k}) B_{k}^{(1)}(\beta_{1}, \tau_{k}) \notag \\ 
& A_{k}^{(2)}(\alpha_{2}, \gamma_{k}) B_{k}^{(2)}(\beta_{2}, \tau_{k}) \notag \\
= & \sum_{k=1}^{K} \sum_{\alpha_{1}=1}^{m_{1}} \sum_{\alpha_{2}=1}^{m_{2}} \sum_{\gamma_{k}=1}^{C_{k}^{(\bm{\mathcal{A}})}} \sum_{\tau_{k}=1}^{C_{k}^{(\bm{\mathcal{B}})}} \mathcal{X}(\alpha_{1}, \alpha_{2}) A_{k}^{(1)}(\alpha_{1}, \gamma_{k}) B_{k}^{(1)}(\beta_{1}, \tau_{k}) \notag \\
& A_{k}^{(2)}(\alpha_{2}, \gamma_{k}) B_{k}^{(2)}(\beta_{2}, \tau_{k})
\end{align}
which can conclude this theorem by combining with Equation (\ref{Eq_thm_3_2}).
\end{proof}

\subsection{Proof of the Theorem \ref{Thm_CompComp}}\label{appdx:c}
\begin{proof}
Firstly, according to Algorithm \ref{Alg_KCP_2}, there are totally \(2d\) operations in single branch \(k\), since \(d\) is even. Therefore, just considering \(i=1\) and \(i=2\) is enough to infer the whole computation complexity. As described in Algorithm \ref{Alg_KCP_2} and intuitively shown in Figure \ref{Fig_multiplication}(b), \(C^{(\bm{\mathcal{A}})}\) is introduced when \(\bm{A}_{k}^{(1)}\) is passed, then \(C^{(\bm{\mathcal{B}})}\) is introduced by Kronecker product with \(\bm{B}_{k}^{(1)}\), finally \(C^{(\bm{\mathcal{A}})}\) and \(C^{(\bm{\mathcal{B}})}\) are separately contracted during the following two operations. The separate computation complexity of these four steps are \(\mathcal{O}\left({\rm max}\{m,n\}^{d}C^{(\bm{\mathcal{A}})}\right)\), \(\mathcal{O}\left({\rm max}\{m,n\}^{d}C^{(\bm{\mathcal{A}})}C^{(\bm{\mathcal{B}})}\right)\), \(\mathcal{O}\left({\rm max}\{m,n\}^{d}C^{(\bm{\mathcal{A}})}C^{(\bm{\mathcal{B}})}\right)\), and \(\mathcal{O}\left({\rm max}\{m,n\}^{d}C^{(\bm{\mathcal{B}})}\right)\). Therefore, consider all the \(K\) branches, from \(i=1\) to \(i=2\), the computation complexity is
\begin{equation}\label{Eq_thm_4_1}
\mathcal{O}\left({\rm max}\{m,n\}^{d}\left(C^{(\bm{\mathcal{A}})} + C^{(\bm{\mathcal{B}})} + 2C^{(\bm{\mathcal{A}})}C^{(\bm{\mathcal{B}})}\right)K\right).
\end{equation}
Naturally, the whole corresponding computation complexity of Algorithm \ref{Alg_KCP_2} should be
\begin{equation}\label{Eq_thm_4_2}
\mathcal{O}\left(d{\rm max}\{m,n\}^{d}\left(\frac{1}{2}C^{(\bm{\mathcal{A}})} + \frac{1}{2}C^{(\bm{\mathcal{B}})} + C^{(\bm{\mathcal{A}})}C^{(\bm{\mathcal{B}})}\right)K\right).
\end{equation}

Secondly, for any meaningful tensorizing, there must be \({\rm max}\{m,n\} \geq 2\), which could result
\begin{equation}\label{Eq_thm_4_3}
({\rm max}\{m,n\}-1)C^{(\bm{\mathcal{A}})}C^{(\bm{\mathcal{B}})} \geq C^{(\bm{\mathcal{A}})}C^{(\bm{\mathcal{B}})}.
\end{equation}
In addition, since \(C^{(\bm{\mathcal{A}})} \in \mathbb{N}_{+}\) and \(C^{(\bm{\mathcal{B}})} \in \mathbb{N}_{+}\), there always have
\begin{equation}\label{Eq_thm_4_4}
C^{(\bm{\mathcal{A}})}C^{(\bm{\mathcal{B}})} \geq \frac{1}{2}\left(C^{(\bm{\mathcal{A}})}+C^{(\bm{\mathcal{B}})}\right).
\end{equation}
On the basis of Inequality (\ref{Eq_thm_4_3}) and (\ref{Eq_thm_4_4}), we could have
\begin{equation}\label{Eq_thm_4_5}
{\rm max}\{m,n\}C^{(\bm{\mathcal{A}})}C^{(\bm{\mathcal{B}})} \geq \frac{1}{2}C^{(\bm{\mathcal{A}})} + \frac{1}{2}C^{(\bm{\mathcal{B}})} + C^{(\bm{\mathcal{A}})}C^{(\bm{\mathcal{B}})},
\end{equation}
which obviously implies that
\begin{equation}\label{Eq_thm_4_6}
\begin{aligned}
&\mathcal{O}\left(d{\rm max}\{m,n\}^{d+1}C^{(\bm{\mathcal{A}})}C^{(\bm{\mathcal{B}})}K\right) \\
\geq & \mathcal{O}\left(d{\rm max}\{m,n\}^{d}\left(\frac{1}{2}C^{(\bm{\mathcal{A}})} + \frac{1}{2}C^{(\bm{\mathcal{B}})} + C^{(\bm{\mathcal{A}})}C^{(\bm{\mathcal{B}})}\right)K\right).
\end{aligned}
\end{equation}
\end{proof}

\section*{Acknowledgment}
This work was supported partially by National Key R\&D Program of China (2018YEF0200200), and Beijing Academy of Artificial Intelligence (BAAI), and the Science and Technology Major Project of Guangzhou (202007030006), and the open project of Zhejiang Laboratory.


\end{document}